\documentclass[a4paper,11pt,oneside]{article}
\usepackage[margin=1.2in]{geometry}

\usepackage{amsfonts}
\usepackage{dsfont}
\usepackage{amsmath,amssymb,amsthm}
\usepackage{algorithm,algpseudocode}

\usepackage{amsfonts,amssymb, amsmath, amsthm,
			latexsym,
			mathrsfs,
			url,
			verbatim}
\usepackage{graphicx,  subcaption}

\usepackage{algorithm, algorithmicx,algpseudocode}\usepackage{natbib}

\usepackage{multirow}

\newcommand{\bbR} 	{\mathbb{R}}
\newcommand{\calP}	{\mathcal{P}}
\newcommand{\cN}	{\mathcal{N}}
\newcommand{\cR}	{\mathcal{R}}
\newcommand{\C}		{\mathcal{C}}
\newcommand{\F}		{\mathcal{F}}
\renewcommand{\H}	{\mathcal{H}}
\newcommand{\R}   	{\mathbb{R}}
\newcommand{\X}		{\mathcal{X}}
\newcommand{\Y}		{\mathcal{Y}}
\newcommand{\Z}		{\mathcal{Z}}

\newcommand{\Ex}{\mathbb{E}}
\newcommand{\E}{\mathbb{E}}

\newcommand{\abs}[1]{\left| #1 \right|}

\renewcommand{\epsilon}	{\varepsilon}
\newcommand{\eps}{\varepsilon}

\newcommand{\lip}[1]{\nrm{#1}_{\textrm{{\tiny \textup{Lip}}}}}

\newcommand{\norm}[2][\cdot]{||#2||}
\newcommand{\nrm}[1]{\left\Vert #1 \right\Vert}

\newcommand{\beq}	{\begin{eqnarray*}}
\newcommand{\eeq}	{\end{eqnarray*}}
\newcommand{\beqn}	{\begin{eqnarray}}
\newcommand{\eeqn}	{\end{eqnarray}}
\newcommand{\ben}	{\begin{enumerate}}
\newcommand{\een}	{\end{enumerate}}
\newcommand{\bit}	{\begin{itemize}}
\newcommand{\eit}	{\end{itemize}}
\newcommand{\hide}[1]{}
\newcommand{\inv}{^{-1}} %

\newcommand{\OnePt}
{\textup{{\textsf{OnePointExtension}}}}
\newcommand{\LipSmooth}{\textup{{\textsf{LipschitzSmooth}}}}
\newcommand{\MultiPt}{\textup{{\textsf{MultiPointExtension}}}}

\newcommand{\set}[1]{\left\{ #1 \right\}}
\newcommand{\sset}[1]{\{ #1 \}}
\DeclareMathOperator*{\argmin}{argmin}
\DeclareMathOperator {\Ball}  {Ball}

\newcommand{\iprd}[1]{\langle #1 \rangle}

\newtheorem{theorem}{Theorem}[section]

\newtheorem{problem}[theorem]{Problem}

\usepackage{todonotes}

\title{Efficient Kirszbraun Extension with Applications to Regression}

\author{
Hanan Zaichyk\\Ben-Gurion University\\\texttt{zaichyk@post.bgu.ac.il}
\and
Armin Biess\\Ben-Gurion University\\\texttt{armin.biess@gmail.com}
\and  
Aryeh Kontorovich\\Ben-Gurion University\\\texttt{karyeh@bgu.ac.il}
\and
Yury Makarychev\\Toyota Technological Institute at Chicago\\\texttt{yury@ttic.edu}
}

\begin{document}

\maketitle

\begin{abstract}

We introduce a framework for performing regression between two Hilbert spaces. This is done based on Kirszbraun's extension theorem,
to the best of our knowledge, the first application of this technique to supervised learning. We analyze the statistical and computational aspects of this method. We decompose this task into two stages: training (which corresponds operationally to smoothing/regularization) and prediction (which is achieved via Kirszbraun extension). Both are solved algorithmically via a novel multiplicative weight updates (MWU) scheme, which, for our problem formulation, achieves a quadratic runtime improvement over the state of the art. Our empirical results indicate a dramatic improvement over standard off-the-shelf  solvers in our setting.

\end{abstract}

\section{Introduction.}\label{intro} %%1.

\paragraph{Regression.} 
The classical problem of estimating a continuous-valued function from noisy observations, known as regression, is of central importance in statistical theory with a broad range of applications; see, for example,  \citet{gyorfi2006distribution, nadaraya1989nonparametric}.  When the target function is assumed to have a specific structure, the regression problem is termed {\em parametric}
and the optimization problem is finite-dimensional.
%, means the main objective is to "tune" a set of %specific parameters according to the data. 
Linear regression \citep[chapter 10.3.1]{mohri2012foundations}
is perhaps the simplest and most common type of parametric regression. When no structural assumptions concerning the target function are made, the regression problem is 
%termed 
{\em nonparametric}. 
Informally, the main objective in the study of nonparametric regression is to understand the relationship between the regularity conditions that a function class might satisfy (e.g., Lipschitz or H\"older continuity, or sparsity in some representation)
and its behavior vis-\`{a}-vis optimization and generalization.
Most existing algorithms for regression
either focus on the scalar-valued case or else reduce multiple outputs to several scalar problems \citep{borchani2015survey}, 
see Related Work.
%\HZ{which make the statistical guarantees less accurate - formalize this sentence}.
%\AK{I would omit this bit, we discuss some of that in Rel. work}
%\HZ{I moved this part to 'our contribtuion': In this paper, we propose a method that directly exploits the metrics of the input and output spaces, which makes it explicitly sensitive to the interaction among the output components.}
%*\hz{We might need to explain here about ML, training/test error and ERM}.

\paragraph{Convex optimization.} 
Many learning problems can be cast in the framework of convex optimization. In particular, regression naturally lends itself to this formulation. While some cases, such as linear regression, admit efficient closed form solutions, this is not the case in general. 
Typically, convex optimization
problems are solved via
iterative methods
up to a specified accuracy. 
One general approach is the
interior-point methods,
which, on problems with $n$ variables
and $m$ constraints
achieves a runtime of
%for instance, is considered to be efficient %in practice, and in some cases are proved to perform in   
$O(\max\{n^3,n^2 m, F \})$, 
where $F$ id the cost of evaluating the
first and second derivatives of the objective and the constrains \citep{boyd2004covex}.

\paragraph{Motivation and contribution.}
%\HZ{
The chief motivation of this work was to generalize the results of \citet{gottlieb2017efficient},
who provided efficient nonparametric regression
methods in the scalar-output case.
Attempts to numerically solve our optimization problem,
which is naturally
formulated as a Quadratically Constrained Quadratic Program  (QCQP),
via state-of-the-art 
off the shelf solvers indicated that
these are incapable of handling
our framework,
even for relatively small data sets and dimensions.
This limitation of QCQP solvers
%\HZ{This gap 
motivated us to develop a specialized algorithm
to solve the optimization problem entailed
by our regression setting.
%to better support our optimization problem.}

In Section~\ref{sec:experiments}, we show that
our specialized algorithm dramatically
outperforms general-purposed QCQP solvers;
this algorithm, its theoretical analysis,
and MATLAB code\footnote{
available at
\url{https://github.com/HananZaichyk/Kirszbraun-extension}
}
%, thus motivated the need for a more efficient optimization algorithms for such problems.
are the main contributions of this paper.
%}
%\paragraph{Our contribution.}
%\HZ{
%added this part:
We  introduce  a  framework  for  performing  regression  between  two  Hilbert  spaces.  This  is  done  based  on Kirszbraun's extension theorem --- to the best of our knowledge, the first application of this technique to supervised learning. This method directly exploits the metrics of the input and output spaces, which makes it explicitly sensitive to the interaction among the output components. 
Although our main contributions are algorithmic,
a statistical analysis of our regression
technique is provided in Appendix
%More so, it enable us to statistically analyse the learner's output as we do in 
\ref{sec:gen-bds}.
%}

We formulate the regression problem
in two stages: {\em smoothing} and 
{\em extension},
which are formally described in
%Our smoothing and extension problems are formally stated in 
Section~\ref{sec:alg}. Roughly speaking,
on a dataset of size $n$ with $a$ input and $b$ output dimensions, we 
formulate the smoothing problem
as
a Quadratically Constrained Quadratic Program (QCQP) problem
with $bn$ variables and $O(n^2)$
constraints.
The extension problem is also
formulated as a QCQP with
$O(n)$ variables
and $O((a+b)n)$ constraints.

Although general QCQP problems are not
convex, our special instance
is, and as such is, in principle,
amenable to the standard
%does lend itself to the 
convex
optimization framework, such as
interior point methods.
When solving large-scale problems,
even a modest improvement in the exponent
yields dramatic runtime savings.
We propose a  
Multiplicative Weight Update (MWU) scheme to
solve the smoothing problem,
%up 
to a 
%precision $\eps$,
constant precision,
%to the Lipshchitz constraint,
in
runtime  $O(ma + {m^{3/2} (\log m)^2 b })$
and the extension problem in runtime
$O(na + nb \log n)$.

\paragraph{Related work.}
Previous approaches to vector-valued regression include
$\epsilon$-insensitive SVM with $p$-norm regularization
\citep{brudnak2006vector}, least-squares and MLE-based
methods \citep{jain2015alternating},
and (for linear models) the Danzig selector
\citep{chen2018improved}. According to a recent survey
\citep{borchani2015survey},
existing methods essentially ``transform the multi-output problem into independent single-output problems.''
%--- i.e., solve multiple tasks at the same time. 
Some approaches to multitask learning problems \citep{caruana1997multitask} exploit relations between the different tasks. In econometrics, this decoupling of the outputs is made explicit in the Seemingly Unrelated Regressions (SUR) model \citep{davidson1993estimation,greene2003econometric,greene2012econometric}.
 These approaches however, do not seem to encapsulate the need of a single vector output with possibly strong relations between its coordinates. In our approach, 
we devise a principled approach for leveraging the dependencies via Kirszbraun extension.
The latter has previously been applied
by \citet{mahabadi2018nonlinear}
to dimensionality reduction (unsupervised learning),
but to the best of our knowledge has not been used in the supervised learning setting.

Both of our problems (smoothing and extension)
may be formulated as 
QCQP programs,
whose most general form is
\begin{align*}
\mathrm{Minimize}\ & x^\top P_0 x+a^\top x\\
\mathrm{subject}\ to\ \ & x^\top P_ix + a_ix  \leq b_i ,i=1,\dots,m
\end{align*}
where $a$,$a_{i\in[m]}$ and $x$ are vectors, $P_0,P_{i\in[m]}$ are matrices, and the $b_i$ are scalars.
The general problem is NP-hard, but
when all of the $P_i$ are semi-definite, the problem is convex and can be solved in polynomial time
\citep{boyd2004covex}.
%\yury{(added the following)}
The QCQP is usually solved in practice using log-barrier or primal-dual interior-point methods. The running time of an optimization algorithm based on the interior-point methods significantly depends on the problem at hand. Specifically, consider a problem with $N$ variables and $m$ constraints. In order to obtain a $(1+\varepsilon)$-approximate solution, the algorithm has to perform $\Theta(\sqrt{m}\log (1/\varepsilon))$ iterations in the worst-case \citep[Chapter 6]{nesterov1994interior}. In each iteration, the algorithm has to initialize and invert an $N\times N$ Hessian matrix (or equivalently solve a system of $N$ linear equations with $N$ variables). 
%Now, t
The time required to initialize the Hessian matrix is problem specific: while it is $O(m N^2)$ in the worst case, it is often significantly less than that. The Hessian matrix can be inverted in $O(N^\omega)$ time, where $\omega$ is the matrix multiplication exponent \citep{BunchHopcroft74} (the best current upper bound on $\omega$ is $2.37286$ \citep{AlmanWilliams21}). However, to the best of our knowledge, all implementations used in practice perform this step in $O(N^3)$ time.
That said, this step can be significantly sped up if the Hessian matrix has a special structure.

Our Multiplicative Weight Update (MWU)
scheme is based on
the 
framework of \citet{Arora2012the}.
We include the relevant background and results in the Appendix for completeness.
%that we invoke in our methods.
\iffalse
We design algorithms for Kirszbraun extension problems \yury{We should define them above?} using
the Multiplicative Weight Update (MWU) framework \citet{Arora2012the}.
%Our proposed approximate solution for the QCQP problems takes leverage from \citet{kirszbraun34} guarantees, to implement an \textit{oracle} that solve a smaller problem, and allows us to use the Multiplicative Weight Update (MWU) framework described as in \citet{Arora2012the}.
\fi

\paragraph{Main results.}
We cast the general regression problem
between Hilbert spaces
as 
%a
two QCQP programs, 
%for which we 
and provide 
an
efficient algorithm for each problem.

The problem setup, formalized in
Section~\ref{sec:definitions},
involves a dataset of size $n$
of vectors in an $a$-dimensional Euclidean
space labeled by $b$-dimensional vectors.
The {\em smoothing} (also: training, regularization, denoising)
problem (Section~\ref{sec:ERM})
is to perturb the labels so
as to achieve the user-specified Lipschitz
smoothness constraint while incurring
a minimum distortion.
This is a standard statistical technique,
known as {\em regularization},
which prevents
%This is done in order to prevent
overfitting in prediction.
Our Theorem~\ref{thm:LipSmooth}
solves the smoothing optimization problem, up to a tolerance $\eps$, in runtime
$O(an^2+bn^3(\log n)^2\log(1/\eps)/\eps^{5/2})$.

Next,
we address
the task of prediction
(i.e., assigning a label to a test point).
In
Theorem~\ref{thm:lip_ext},
we accomplish this via
$\eps$-approximate 
Kirszbraun extension
of the smoothed dataset,
in runtime
$O(an+bn(\log n)/\eps^2)$.
For small $a$, an improvement is possible:
a data structure
can be constructed off-line
at a (once) runtime cost of $2^{O(a)}n\log n$
that allows to answer (multiple)
future prediction queries
in time 
$$(1/\eps)^{O(a)}b\log n\log\log n.$$

In Section~\ref{sec:experiments},
we compare the performance of our
MWU-based approach to a state of the art
interior-point based solver
and report a significant runtime advantage,
which allows to process larger samples and ultimately yields greater accuracy.

Finally, for completeness, 
in Section~\ref{sec:gen-bds},
we
include a Rademacher-based analysis of
the generalization error
of our regression algorithm.

\section{Formal setup.}\label{sec:definitions} %% 2.

\paragraph{Metric space.} A metric space $(X,d_X)$ is a set $X$ equipped with
a symmetric function $d_X:X^2\to[0,\infty)$
  satisfying
  $d_X(x,x')=0\iff x=x'$
  and
  the triangle inequality.
  Given two metric spaces $(X,d_X)$ and $(Y,d_Y)$,
  a function
$f:X\to Y$
  is {\em $L$-Lipschitz} if
  $d_Y(f(x),f(x'))\le Ld_X(x,x')$ for all $x,x'\in X$;
  its {\em Lipschitz} constant $\lip{f}$ is the smallest $L$
  for which the latter inequality holds.
  For any metric space $(X,d_X)$
  and $A\subseteq X$,
  the following
  classic {\em Lipschitz extension} result,
  essentially due to \citet{mcshane1934,Whitney1934},
  holds. If $f:A\to\R$ is Lipschitz (under the inherited metric)
  then there is an {\em extension} $f^*:X\to\R$ that coincides with
  $f$ on $A$ and $\lip{f}=\lip{f^*}$.
  A {\em Hilbert space} $H$ is a vector space (in our case, over $\R$)
  equipped with an inner product $\iprd{\cdot,\cdot}:H^2\to\R$,
  which is a
positive-definite symmetric bilinear form;
further, $H$ is complete in the metric $d_H(x,x'):=\sqrt{\iprd{x-x',x-x'}}$.
\paragraph{Kirszbraun theorem.}
\citet{Kirszbraun34} proved that for two Hilbert spaces $(X,d_X)$ and $(Y,d_Y)$, and $f$ mapping $A\subseteq X$ to $Y$,
there is an extension $f^*:X\to Y$ such that $\lip{f}=\lip{f^*}$. This result is in general false for Banach spaces whose norm is not induced by an inner product  \citep{naor15}.

\paragraph{Learning problem.}
We assume a familiarity with the abstract agnostic learning framework
and refer the reader to \citet{mohri2012foundations} for background.
%\armin{Our approach to identifying (static) corresponding postures between dissimilar robots is to learn a mapping between two Hilbert spaces, $(X,d_X)$ and $(Y,d_Y)$. WE DECIDED NOT TO ADDRESS THIS PROBLEM - NEEDS MODIFICATION.}
%Thus, 
Our approach will be applied to learn a mapping between two Hilbert spaces, $(X,d_X)$ and $(Y,d_Y)$. We assume a fixed unknown distribution $P$ on $X\times Y$
and a labeled sample $(x_i,y_i)_{i\in[n]}$ of input-output examples.
The {\em risk} of a given mapping $f:X\to Y$ is defined as
$R(f)=\Ex_{(x,y)\sim P}[d_Y(f(x),y)]$;
implicit here is our designation of the metric of $Y$ as the loss function.
Analogously, the empirical risk of $f
$ on a labeled sample
is given by $\hat R_n(f)=n\inv\sum_{i\in[n]}d_Y(f(x_i),y_i)$.
In this paper, we always take $X=\R^a$ and $Y=\R^b$, each equipped with
the standard Euclidean metric. Uniform deviation bounds on $|R(f)-\hat R_n(f)|$,
over all $f$ with $\lip{f}\le L$
are given in Section~\ref{sec:gen-bds}.

\section{Learning algorithm}
\label{sec:alg}

\paragraph{Overview.}
We follow the basic strategy proposed by \citet{gottlieb2017efficient} for real-valued regression.
We are given a labeled sample $(x_i,y_i)_{i\in[n]}$, where $x_i\in X:=\R^a$ and $y_i\in Y:=\R^b$.
For 
a user-specified
%each candidate 
Lipschitz constant $L>0$, we compute
the (approximate) Empirical Risk Minimizer (ERM) 
$\hat f:=\argmin_{f\in F_L}\hat R_n(f)$
over $F_L:=\sset{ f\in Y^X : \lip{f}\le L}$.
(A standard method for tuning $L$
is via
%Following the standard 
Structural Risk Minimization (SRM):
One computes a
generalization bound
$R(\hat f)\le \hat R_n(\hat f)+Q_n(a,b,L)$,
where $Q_n(a,b,L):=\sup_{f\in F_L}|R(f)-\hat R_n(f)|=O(L/n^{a+b+1})$,
as
derived in
in Section~\ref{sec:gen-bds},
and chooses
$\hat L$ to minimize this.
We omit this standard stage of the
learning process.)

Predicting the value at a test point $x^*\in X$ amounts to Lipschitz-extending
$\hat f$ from $\set{ x_i : i\in [n]}$ to $\set{ x_i : i\in [n]}\cup\set{x^*}$.
Equivalently, the ERM stage may be viewed as a {\em smoothing} procedure,
where
$\tilde y_i:=\hat f(x_i)$
and
$(x_i,\tilde y_i)_{i\in[n]}$
is the {\em smoothed sample} --- which is then (approximately) Lipschitz-extended to $x^*$.
We proceed to describe each stage in detail.

\subsection
    {Approximate Lipschitz extension}
    \label{sec:lipext}

\paragraph{Problem statement.}
Given a finite sequence $(x_i)_{i\in[n]}\subset X=\R^a$,
its image
$(y_i)_{i\in[n]}\subset Y=\R^b$
under some $L$-Lipschitz map $f:X\to Y$,
a test point $x^*$,
and a precision parameter $\eps>0$,
we wish to compute $y^*=f(x^*)$
so that
$\nrm{y^*-f(x_i)}\le(1+\eps)L\nrm{x^*-x_i}$
for all $i\in[n]$.
Our first result is an efficient algorithm for achieving this:

\begin{theorem}\label{thm:lip_ext}
  The approximate Lipschitz extension algorithm \OnePt\ has runtime
  $O(na + nb \log n /\eps^2)$.
\end{theorem}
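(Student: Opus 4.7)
The plan is to formulate prediction as a convex feasibility problem and solve it by Multiplicative Weights Update (MWU). Setting $r_i:=L\nrm{x^*-x_i}$, we seek $y^*\in\R^b$ satisfying the $n$ ball constraints
\[
\phi_i(y)\ :=\ \nrm{y-y_i}^2-(1+\eps)^2 r_i^2\ \le\ 0,\qquad i\in[n].
\]
Kirszbraun's theorem, applied to the hypothesized $L$-Lipschitz map that produced the sample, guarantees that $\bigcap_i \set{y:\nrm{y-y_i}\le r_i}$ is already nonempty, so the relaxed system has a feasibility margin of order $\eps$.

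To exploit this, I would maintain a distribution $w^{(t)}$ over constraints and, in each round, invoke the oracle that (approximately) minimizes $\sum_i w_i^{(t)}\phi_i(y)$. Since each $\phi_i$ is quadratic in $y$ with identity Hessian, the minimizer has the closed form $y^{(t)}=\sum_i w_i^{(t)} y_i /\sum_j w_j^{(t)}$ (the weighted centroid of the labels). Writing $\tilde y$ for the Kirszbraun witness satisfying $\nrm{\tilde y-y_i}\le r_i$ for all $i$, one verifies
\[
\sum_i w_i^{(t)}\phi_i(y^{(t)})\ \le\ \sum_i w_i^{(t)}\phi_i(\tilde y)\ \le\ -\eps(2+\eps)\sum_i w_i^{(t)} r_i^2\ \le\ 0,
\]
so the oracle always returns an admissible candidate. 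Appealing to the standard Arora--Hazan--Kale MWU analysis, $T=O(\eps^{-2}\log n)$ rounds suffice to drive the normalized residuals $\phi_i(\bar y)/r_i^2$ below $O(\eps)$ for the averaged iterate $\bar y = T\inv\sum_{t\le T} y^{(t)}$, which I then return as $y^*$.

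For the runtime accounting, a one-time preprocessing pass computes $\nrm{x^*-x_i}$ and $r_i^2$ in $O(na)$ time. Each MWU round then costs $O(nb)$ for the centroid, $O(nb)$ for evaluating the $n$ residuals via the precomputed $r_i^2$, and $O(n)$ for the exponential weight update, for a per-round cost of $O(nb)$. Multiplying by the $O(\eps^{-2}\log n)$ round count yields the claimed $O(na+nb\log n/\eps^2)$ bound.

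The main obstacle is width control: the $O(\eps^{-2}\log n)$ iteration count is only valid when the rescaled residuals $\phi_i(y^{(t)})/r_i^2$ are confined to an absolute-constant interval uniformly in $t$. To see this, note that the centroid satisfies $\nrm{y^{(t)}-y_i}\le r_i+\sum_j w_j^{(t)} r_j$ by the triangle inequality (using $\nrm{y_j-y_i}\le r_i+r_j$, which follows from the Lipschitz assumption on the sample itself), and a telescoping argument through the MWU potential controls the tail contribution of large $r_j$. Once the normalized width is a constant, the standard regret bound closes the analysis and the per-iteration costs above deliver the stated runtime.
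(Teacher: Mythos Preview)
Your high-level plan --- cast the prediction as a convex feasibility system and solve it with MWU, then count $O(na)$ for preprocessing and $O(nb)$ per round over $O(\eps^{-2}\log n)$ rounds --- matches the paper's approach. But the width argument, which you correctly flag as ``the main obstacle,'' is a genuine gap as written, and your oracle is not the right one.

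Concretely: if you run MWU on the \emph{normalized} residuals $\phi_i(y)/r_i^2$ (which is what you need for the $O(\eps^{-2}\log n)$ round count), then the oracle should minimize $\sum_i w_i\phi_i(y)/r_i^2$, whose minimizer is the $r_i^{-2}$-reweighted centroid $\sum_i (w_i/r_i^2)y_i\big/\sum_j(w_j/r_j^2)$, not the plain centroid $\sum_i w_i y_i$. Your oracle minimizes the \emph{unnormalized} $\sum_i w_i\phi_i(y)$; with that objective the relevant width is $\max_i|\phi_i(y^{(t)})|$, which is unbounded. Your own inequality $\nrm{y^{(t)}-y_i}\le r_i+\sum_j w_j^{(t)} r_j$ already exposes the problem: dividing by $r_i$ gives $1+(\sum_j w_j r_j)/r_i$, and if $x^*$ is close to some $x_i$ (so $r_i$ is tiny) while other $r_j$ are large, this ratio is arbitrarily large already at $t=1$ with uniform weights. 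The ``telescoping argument through the MWU potential'' is not a recognizable device that would fix this; the MWU potential controls regret, not the range of individual costs.

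The paper closes this gap with two ingredients you are missing. First, it restricts the search to the ball $\calP=\Ball(y^\circ,\nrm{x^\circ-x^*})$ around the label of the \emph{nearest} sample point $x^\circ$; because $\nrm{x^\circ-x^*}\le\nrm{x_i-x^*}$ for every $i$, one gets the uniform width bound $h_i(y)\in[-2,1]$ for all $y\in\calP$, where $h_i(y)=1-\nrm{y-y_i}/r_i$ (note: linear in the distance, not squared). Second, the oracle computes the $r_i^{-2}$-reweighted centroid and then \emph{projects onto} $\calP$; feasibility of the projected point for the averaged constraint is verified via Cauchy--Schwarz, comparing against the Kirszbraun witness. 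Without the nearest-neighbor ball and the projection, there is no width bound, and the $O(\eps^{-2}\log n)$ iteration count is unjustified.
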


The query runtime can be significantly improved if the dimension of $X$ is moderate:

\begin{theorem}\label{thm:low-dim-lip_ext}

  There is a data structure for the Lipschitz extension problem of
memory
  size $O(2^{O(a)}n)$ that can be constructed in time $O(2^{O(a)} n \log n)$.
Given a query point $x^*$ and a parameter $\eps \in (0,1/2)$, one can compute $y^*$ such that
$\|y^* - y_i\| \leq (1+\eps) L \|x^* - x_i\|$ for every $i$
in time $(1/\eps)^{O(a)} b\log n \log \log n$.
\end{theorem}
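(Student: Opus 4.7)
The plan is to exploit the low doubling dimension of $\R^a$ (which is $\Theta(a)$) so that at query time the $n$ Lipschitz constraints can be replaced by a representative subset of size only $(1/\eps)^{O(a)}$, to which the algorithm of Theorem~\ref{thm:lip_ext} is then applied. During preprocessing I would build a hierarchical net data structure on the inputs; at query time I would extract a small ``witness set'' whose constraints dominate all of the original ones up to a $(1+\eps)$ factor.

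\textbf{Preprocessing.} Construct a navigating net (or a net tree) on $\set{x_i : i \in [n]} \subset \R^a$, carrying each $y_i$ as satellite data. Since $\R^a$ has doubling dimension $O(a)$, such structures can be built in $O(2^{O(a)} n \log n)$ time and $O(2^{O(a)} n)$ space, matching the stated bounds.

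\textbf{Query.} Given $x^*$ and $\eps \in (0,1/2)$, extract a set $S \subseteq [n]$ with the property that for every $i \in [n]$ some $j \in S$ satisfies $\nrm{x_i - x_j} \le (\eps/C)\nrm{x^* - x_j}$, for a sufficiently large absolute constant $C$. Concretely, for each dyadic distance scale $r = 2^k$ in the relevant range I would include an $(\eps r/C)$-net of $\set{x_i : \nrm{x^* - x_i} \in [r, 2r)}$. Standard doubling-packing bounds the net at each scale by $(1/\eps)^{O(a)}$, and the navigating-net primitives let us enumerate $S$ in total time $(1/\eps)^{O(a)} \log n \log \log n$. Now apply Theorem~\ref{thm:lip_ext} to the subproblem $\set{(x_j, y_j) : j \in S}$ with precision $\eps/4$; this yields $y^*$ satisfying $\nrm{y^* - y_j} \le (1+\eps/4) L \nrm{x^* - x_j}$ for every $j \in S$ in additional time $(1/\eps)^{O(a)}(a+b)$, which is absorbed in the claimed query bound.

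\textbf{Correctness and main difficulty.} Feasibility of the reduced problem is immediate from Kirszbraun's theorem applied to the global $L$-Lipschitz extension $f^*$ of the sample: the value $f^*(x^*)$ already satisfies the sub-constraints without any slack, so the $(1+\eps/4)$-relaxation is certainly feasible. The approximation guarantee follows from a short triangle-inequality computation: for any $i \in [n]$ and its witness $j \in S$ we have $\nrm{y^* - y_i} \le \nrm{y^* - y_j} + L\nrm{x_j - x_i}$, each term is bounded by a small multiple of $L\nrm{x^* - x_j}$, and $\nrm{x^* - x_j}$ is within a $(1 \pm \eps/C)$ factor of $\nrm{x^* - x_i}$, so choosing $C$ large yields the $(1+\eps) L \nrm{x^* - x_i}$ bound. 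I expect the main obstacle to be data-structural: realising the claimed $(1/\eps)^{O(a)} b \log n \log \log n$ query time requires combining per-scale range queries with the net-tree's approximate-nearest-neighbour primitives so that no pass over all $n$ points is ever made and the dependence on $n$ stays at $\log n \log \log n$; the $b$-factor enters because every representative $y_j$ lives in $\R^b$. The cardinality bound on $S$ is a routine volume estimate, but extracting $S$ quickly and stitching the scales together is where the hierarchical-net technology does the real work.
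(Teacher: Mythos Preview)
Your overall plan is correct and matches the paper's: reduce the $n$ constraints to a representative set of size $(1/\eps)^{O(a)}$, run \OnePt\ on that set, and propagate the guarantee to all $i$ by a triangle-inequality argument. The difference lies in how the representative set is built. You propose a dyadic decomposition of distance scales around $x^*$, extracting an $(\eps r/C)$-net of each annulus via navigating-net range queries. The paper instead uses a black-box approximate-nearest-neighbor structure (\citet{arya1994optimal} or \citet{har-peled2006fast}): it first finds an approximate nearest neighbor $x^\circ$ with $r=\|x^\circ-x^*\|$, then takes a single $(\eps r/3)$-net $N'$ of the \emph{ambient} ball $\Ball(x^*,r/\eps)$ and maps each $p\in N'$ to its ANN in $X$. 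Points farther than $r/\eps$ from $x^*$ receive no witness of their own; they are handled by $x^\circ$ alone via a separate triangle-inequality case, namely $\|y_i-y^*\|\le\|y_i-y^\circ\|+\|y^\circ-y^*\|\le L\|x_i-x^\circ\|+(1+\eps)Lr$ together with $r\le\eps\|x_i-x^*\|$.

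What the paper's construction buys is that the $(1/\eps)^{O(a)}$ bound on the representative set is immediate --- it is simply the packing number of a ball of radius $r/\eps$ at scale $\eps r/3$ --- and there is no need to reason about how many dyadic scales are ``relevant,'' which is precisely the point you flag as the main obstacle. Your route is workable, but to complete it you would have to argue explicitly either that scales beyond $r/\eps$ can be collapsed onto the single witness $x^\circ$ (effectively reinventing the paper's two-case split), or that the number of occupied scales is $O(\log n)$ via the compressed net-tree structure. The paper's single-ball-plus-ANN construction sidesteps this entirely and makes the query-time accounting a one-liner.
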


\paragraph{Analysis.}
We analyze algorithm \OnePt~\ref{alg:One} and prove Theorems~\ref{thm:lip_ext} and \ref{thm:low-dim-lip_ext}
via the multiplicative update framework of
\citet{Arora2012the}.
In particular, we will invoke their Theorem 3.4, which, for completeness, is reproduced in
Section~\ref{sec:AHK} as Theorem~\ref{thm:AHK}.
%\armin{SOMETHING NEEDS TO BE FIXED HERE: We start x
To simplify the notation, we assume (without loss of generality) that $L:=\lip{f}=1$.
Let $\calP = \Ball(y^{\circ}, \|x^{\circ} - {x^*}\|)$, and define $h_i(y) = 1 - \frac{\norm{y-y_i}}{\norm{x^* - x_i}}$ for $y\in \Y$ $i\in\{1,\dots,n\}$.
Then the Lipschitz extension problem is equivalent to the following:
find  $y\in\calP$  such that $h_i(y) \geq 0$ for all $i\in[n]$.
Note that functions $h_i$ are concave and thus the problem is in the form of (3.8) from~\citet{Arora2012the}.
We now bound the ``width'' of the problem, proving that $h_i(y) \in [-2,1]$ for every $y\in\calP$
(in the notation from~\citet{Arora2012the}, we show that $\ell\leq 1$ and $\rho \leq 2$). 
Observe that for every $y\in\calP$ and every $i$, we have (i) $h_i(y)\leq 1$ as $\frac{\|y - y_i\|}{\|{x^*} - x_i\|} \geq 0$ and (ii)
\beq
1 - h_i(y) &=& \frac{\|y - y_i\|}{\|{x^*} - x_i\|} \leq \frac{\|y - y^{\circ}\| + \|y^{\circ} - y_i\|}{\|{x^*} - x_i\|}\\
&\leq& \frac{\|x^{\circ} - {x^*}\| + \|x^{\circ} - x_i\|}{\|{x^*} - x_i\|}\\
&\leq& \frac{2\|x^{\circ} - {x^*}\| + \|{x^*} - x_i\|}{\|{x^*} - x_i\|}\\
&=& 1 + 2 \frac{\|x^{\circ} - {x^*}\|}{\|{x^*} - x_i\|} \leq 3.
\eeq
Here, we used that
$\|y - y^{\circ}\| \leq \|x^{\circ} - {x^*}\|$ (which is true since $y\in\calP$),
$\|y^{\circ} - y_i\|\leq \|x^{\circ} - x_i\|$ (which is true since $f$ is 1-Lipschitz), and
$\|{x^*} - x^{\circ}\| \leq \|{x^*} - x_i\|$ (which is true since $x^{\circ}$ is the point closest to ${x^*}$ among all points $x_1,\dots, x_n$).
We conclude that $h_i(y) \in [-2,1]$.

To apply Theorem~\ref{thm:AHK},
we design an oracle for the following problem:
\begin{problem}\label{prob:oracle}
Given non-negative weights $w_i$ that add up to $1$, find $y\in \calP$ such that
\begin{equation}
\sum_{i=1}^n w_i h_i(y) \geq 0. \label{eq:oracle}
\end{equation}
\end{problem}
\noindent Note that Problem~\ref{prob:oracle} has a solution, since ${y^*}$, the Lipschitz extension of $f$ to ${x^*}$ (whose existence is guaranteed by the Kirzsbraun theorem), satisfies (\ref{eq:oracle}).
Define auxiliary weights $p_i$ and $q_i$ as follows:
$$P = \sum_{i=1}^n \frac{w_i}{\|{x^*} - x_i\|^2}\quad\text{and}\quad p_i = \frac{w_i}{P\|{x^*} - x_i\|^2},$$
$$Q = \sum_{i=1}^n \frac{w_i}{\|{x^*} - x_i\|} \quad\text{and}\quad q_i = \frac{w_i}{Q\|{x^*} - x_i\|}.$$
The oracle finds and outputs $z\in \calP$ that minimizes $V(z) = \sum_{i=1}^n p_i \|z-y_i\|^2$. To this end, it first computes
$z_0 = \sum_{i=1}^n p_i y_i$. Note that
$V(z) = \|z-z_0\|^2 + \sum_{i=1}^n p_i\|z_0 - y_i\|^2.$
Then, if $z_0\in\calP$, it sets $z = z_0$; otherwise, $z$ is set to be the point closest to $z_i$ in $\calP$, which is
$$z = \frac{\|x^{\circ} - {x^*}\|}{\|z_0 - y^{\circ}\|} z_0 + \left(1-\frac{\|x^{\circ} - {x^*}\|}{\|z_0 - y^{\circ}\|}\right) y^{\circ}.$$
This $z$ is computed on lines 6--8 of the algorithm. We verify that $z$ satisfies condition (\ref{eq:oracle}).
Rewrite condition (\ref{eq:oracle}) in terms of weights $q_i$:
$Q\sum_{i=1}^n q_i \,\|y-y_i\| \leq 1$.
Using that
$$\|{y^*}-y_i\|^2\left/\|{x^*}-x_i\|^2\right.\leq 1,$$ we get
\begin{align*}
Q\sum_{i=1}^n q_i \|z-y_i\| &= \sum_{i=1}^n w_i \frac{\|z-y_i\|}{\|{x^*}-x_i\|}\\
&{\leq} \left(\sum_{i=1}^n w_i \frac{\|z-y_i\|^2}{\|{x^*}-x_i\|^2}
\sum_{i=1}^n w_i\right)^{1/2} \\
&{\leq}
\left(\sum_{i=1}^n w_i \frac{\|{y^*}-y_i\|^2}{\|{x^*}-x_i\|^2}\right)^{1/2} \leq 1
.
\end{align*}
The first inequality is due to Cauchy--Schwarz, and the second holds since $ V(z)\leq V(y^*)$.
\begin{algorithm}
\caption{\OnePt}
\label{alg:One}
\begin{algorithmic}[1]
  \Require
  labeled sample $(x_i,y_i)\subset(X\times Y)^n$,
$\eps \in (0,1/2)$
  query point $x^*\in X$,
  and upper bound $L\ge\lip{f}$
\Return label $y^*$
\State \textbf{let } $x^{\circ}$ be the nearest neighbor %
of ${x^*}$ among $x_1,\dots, x_n$; $y^{\circ} = f(x^{\circ})$; $d^{\circ} = \|x^{\circ} - {x^*}\|$
\State initialize weights $w_1^{(1)},\dots, w^{(1)}_n$ as follows: $w^{(1)}_i = 1/n$ for every $i$
\State \textbf{let} $d_i = \|x_i - {x^*}\| / L$ for every $i$
\State \textbf{let} $T = \lceil\frac{16 \ln n}{\eps^2}\rceil$ (the number of iterations)

\For {$t = 1$ to $T$}
\State \textbf{let} $P = \sum_{i=1}^n w_i^{(t)}/d_i^2$ and $p_i = \frac{w_i^{(t)}}{P d_i^2}$.
\State \textbf{let} $z_0 = \sum_{i=1}^n p_i y_i$ and $\Delta = \|z_0 - y^{\circ}\|$
\State \textbf{if} $\Delta \leq d^{\circ}$ \textbf{then} $z^{(t)} = z_0$ \textbf{else} $ z^{(t)} = \frac{d^{\circ}}{\Delta} z_0 + \frac{\Delta - d^{\circ}}{\Delta} y^{\circ}$
\State \textbf{update the weights:} $w_i^{(t+1)} = (1 + \frac{\eps \|{\tilde y}-y_i\|}{8d_i})w_i^{(t)}$ for every $i$
\State \textbf{normalize the weights: } compute $W = \sum_{i=1}^n w_i^{(t)}$ and let $w_i^{(t+1)} = w_i^{(t)}/W$ for every $i$
\EndFor

\State \textbf{return} $z = \frac{1}{T}\sum_{t=1}^T z^{(t)}$

\end{algorithmic}
\end{algorithm}

%\armin{BRACKETS}
\begin{proof}{\it Proof of Theorem \ref{thm:lip_ext}.}
From Theorem~\ref{thm:AHK},
  we get that the algorithm finds a $1 + \eps$ approximate
solution in $T = \frac{8\rho\ell \ln m}{\eps^2} = \frac{16 \ln m}{\eps^2}$ iterations. Computing distances $d_i$ takes $O(a n)$ time,
each iteration takes $O(b n)$ time.
\qed
\end{proof}

\begin{proof}{\it 
Proof of Theorem \ref{thm:low-dim-lip_ext} (sketch).}
Our key observation is that we can run the algorithm from Theorem~\ref{thm:lip_ext} on a subset $X'$ of $X$, which is
sufficiently dense in $X$. Specifically, let $x^{\circ}$ be a $(1+\eps)$-approximate nearest neighbour for $x^*$ in $X$.
Assume that a subset $X' \subset X$ contains $x^{\circ}$ and satisfies the following property: for every $x_i \in X \cap \Ball(x^*, \|x^* - x^{\circ}\|/\eps)$,
there exists $x_j \in X'$ such that $\|x_j - x_i\| \leq \eps \|x^* - x_i\|$.

First, we will prove  that by running the algorithm on set $X'$ we get $y^*$ such that $\|y_i - y^*\| \leq (1+O(\eps)) L \|x_i - x^*\|$ for all $i$.
Then we describe a data structure that we use to find $X'$ for a given query point $x^*$ in time $(1/\eps)^{O(a)} \log n$.

\noindent (1) Algorithm from Theorem~\ref{thm:lip_ext} finds $y^*$ such that $\|y_i - y^*\| \leq (1+O(\eps)) L \|x_i - x^*\|$ for all $x_i \in X'$.
Consider $x_i \in X$. First, assume that $x_i \in \Ball(x^*, \|x^* - x^{\circ}\|/\eps)$. Find $x_j \in X'$ such that $\|x_j - x_i\| \leq \eps \|x^* - x_i\|$.
Then
\begin{align*}
\|y_i - y^*\| &\leq \|y_i - y_j\| + \|y_j - y^*\|\\
& \leq L \|x_i - x_j\| + (1+\eps) L \|x_j - x^*\|\\
&\leq L((2 + \eps)\|x_i - x_j\| + (1 + \eps)\|x_i - x^*\|)\\
& \leq (1 + 3\eps + \eps^2)M \|x_i - x^*\|,
\end{align*}
as required. Now assume that $x_i \notin \Ball(x^*, \|x^* - x^{\circ}\|/\eps)$.
\begin{align*}
\|y_i - y^*\| &\leq \|y_i - y^{\circ}\| + \|y^{\circ} - y^*\| \\
&\leq L \|x_i - x^{\circ}\| + (1+\eps) L \|x^{\circ} - x^*\|\\
&\leq L(\|x_i - x^*\| + (2 + \eps)\|x^\circ - x^*\|)\\
& \leq (1 + \eps)^2 L \|x_i - x^*\|.
\end{align*}

We use a data structure $\cal D$ for approximate nearest neighbor search in $X$ . We employ one of the constructions for low-dimensional Euclidean spaces,
by either of
\citet{arya1994optimal} or \citet{har-peled2006fast}.
Using $\cal D$, we can find a $(1+\eps/3)$-approximate nearest neighbor of a point in $\bbR^a$ in time $(1/\eps)^{O(a)} \log n$.
Recall that we can construct $\cal D$ in $O(2^{O(a)} n \log n)$ time, and it requires $O(2^{O(a)} n\log n)$ space.
Suppose that we get a query point $x^*$. We first find an approximate nearest neighbor $x^{\circ}$ for $x^*$. Let $r = \|x^{\circ} - x^*\|$.
Take an $\eps r/3$ net $N'$ in the ball $\Ball(x^*, r/\eps)$.
For every point $p \in N'$, we find an approximate nearest neighbor $x(p)$ in $X$ (using $\cal D$). Let $X' = \{x(p): p \in N'\} \cup \{x^{\circ}\}$.
Consider $x_i \in \Ball(x^*, r/\eps)$. There is $p\in X'$ at distance at most $\eps r/3$ from $x_i$.
Let $x_j = x(p) \in X'$. Then
$$\|p - x_j\| \leq (1+\eps/3) \|x_i - p\| \leq (1+\eps/3) \eps r/3$$ and
\begin{align*}
\|x_i - x_j\| &\leq \|x_i - p\| + \|p-x_j\| \\
&\leq \eps r/3 + (1+\eps/3) \eps r/3 \\
&\leq (2+\eps/3) \eps r/3 \leq 3 \|x^* - x_i\|,
\end{align*}
as required.
The size of $X'$ is at most the size of $N'$, which is $(1/\eps)^{O(a)}$.
\qed
\end{proof}

 \begin{algorithm}
\caption{\MultiPt}
\label{alg:Multi}
\begin{algorithmic}[1]
\Require vectors $x_1,\dots, x_n, x_{n+1},\dots,$
$ x_{n+n'} \in \bbR^a$ and $y_1,\dots, y_n \in \bbR^b$, graph $G = ([n],
E)$, and $M$
\Return $\widetilde{\mathbf{Y}} = (y_{n+1},\dots, y_{n+n'})$

\State \textbf{let} ${\mathbf Y} \equiv (y_1,\dots, y_n)$
\State \textbf{let} $r_{ij} = L \|x_i - x_j\|$ for $(i,j) \in E$
\State \textbf{let} $w_{ij} = 1/m$ for $(i,j)\in E$
\State \textbf{let} $T = c_1 \lceil\frac{\sqrt{m} \ln n}{\varepsilon^2}\rceil$ (the number of iterations)
\For{\textbf{for} $t = 1$ to $T$}
\State \textbf{let} $\lambda_{ij} = \frac{w_{ij} + \varepsilon/m}{r_{ij}^2}$ for $(i,j) \in E$
\State \textbf{define} $n' \times n$ times matrix $K$ as:\\ 
$K_{ij} = \lambda_{i+n, j+n}$ if $(i+n, j+n)\in E$ and $0$ otherwise.
\State \textbf{solve} $L ({\widetilde{\mathbf{Y}}}^t)^\top = K{\mathbf Y}^\top$ for $\widetilde{\mathbf{Y}}^{t}$
\State \textbf{update the weights:} $w_{ij}= \left(1 + c_2 \varepsilon \left(\frac{\|y_i - y_j\|}{r_{ij}} -1\right)\right) w_{ij}$ for every $(i, j) \in E$
\State \textbf{normalize the weights:}  $W = \sum_{(i,j)\in E} w_{ij}$\\ 
$w_{ij} = w_{ij}/W$ for all $(i,j) \in E$
\EndFor\\
\Return $\widetilde{\mathbf{Y}} = \frac{1}{T}\sum_{t=1}^{T} \widetilde{\mathbf{Y}}^t$

\end{algorithmic}
\end{algorithm}

\paragraph{Multi-point Lipschitz extension.}
Finally, we describe an algorithm for the Multi-point Lipschitz Extension. The problem is a generalization of the problem we studied in Section~\ref{sec:lipext} 
%COMMENT: MAYBE YOU SHOULD MAKE A NEW SECTION HERE AS IT IS STILL PART OF SECTION 3.1, MAKE A CLEARER DIVISION BETWEEN OnePointExt AND MultiPointExt}. 
We are given a set of points $X = \{x_1,\dots, x_n\}\subset \bbR^a$ and their images $Y = \{y_1,\dots, y_n\}\subset \bbR^b$ under $L$-Lipschitz map $f$. Additionally, we are given a set $Z = \{x_{n+1},\dots, x_{n+n'}\}\subset \bbR^a$ and a set of edges $E$ on $\{1,\dots, n + n'\}$.
We need to extend $f$ to $Z$ --- that is, find $y_{n+1},\dots, y_{n+n'}$ --- such that $\|y_i - y_j\| \leq (1+\eps) L \|x_i - x_j\|$ for $(i,j) \in E$. We note that $E$ may contain edges that impose Lipschitz constraints
(i) between points in $X$ and $Z$ and (ii) between pairs of points in $Z$. Without loss of generality, we assume that there are no edges $(i,j) \in E$ with $1\leq i,j\leq n$.
\begin{theorem}
  There is an algorithm for the Multi-point Lipschitz Extension problem that runs in time
  $$O(ma + \frac{m^{3/2} (\log m)^2 b \log (1/\eps)}{\eps^{5/2}}),$$ where $m = |E|$.
\end{theorem}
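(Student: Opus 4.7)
The plan is to cast the Multi-point Lipschitz Extension as a concave feasibility problem and invoke the Multiplicative Weight Update (MWU) framework of \citet{Arora2012the} (Theorem~\ref{thm:AHK}), paralleling the proof of Theorem~\ref{thm:lip_ext} but with two new ingredients: an oracle that solves a weighted graph-Laplacian linear system (in place of a ball projection), and a width-control mechanism based on uniform regularization of the constraint weights.

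First I would set up the feasibility formulation. For each edge $(i,j)\in E$, define the concave function $h_{ij}(\mathbf{Y}) = 1 - \|y_i - y_j\|/r_{ij}$ with $r_{ij} = L\|x_i - x_j\|$, where $\mathbf{Y} = (y_{n+1},\dots,y_{n+n'})$ collects the unknowns and $y_1,\dots,y_n$ are held fixed. Kirszbraun's theorem supplies a feasible $\mathbf{Y}^\star$ with $h_{ij}(\mathbf{Y}^\star)\ge 0$ for every edge. The MWU oracle, given weights $w_{ij}$, sets regularized weights $\lambda_{ij} = (w_{ij}+\eps/m)/r_{ij}^2$ and returns the minimizer $\tilde{\mathbf{Y}}$ of $V(\mathbf{Y}) = \sum_{(i,j)\in E}\lambda_{ij}\|y_i - y_j\|^2$. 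The stationarity condition is precisely the linear system whose coefficient matrix is the $\lambda$-weighted Laplacian of $G$ restricted to the unknown vertices --- this is the system solved on line~8 of \MultiPt. A Cauchy--Schwarz argument strictly analogous to the one used for \OnePt\ in Section~\ref{sec:lipext}, based on $V(\tilde{\mathbf{Y}})\le V(\mathbf{Y}^\star) \le 1+\eps$, then verifies that $\tilde{\mathbf{Y}}$ satisfies the weighted aggregate constraint $\sum_{(i,j)} w_{ij} h_{ij}(\tilde{\mathbf{Y}}) \ge -O(\eps)$ required by MWU.

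The crux is the width bound. Unlike the single-point case, the oracle output is not a priori confined to any bounded set, and without care some $h_{ij}(\tilde{\mathbf{Y}})$ could be arbitrarily negative, precluding any nontrivial convergence. The additive perturbation $+\eps/m$ is the fix: each edge's term in $V(\tilde{\mathbf{Y}})$ is bounded below by $(\eps/(m r_{ij}^2))\|\tilde y_i-\tilde y_j\|^2$, and $V(\tilde{\mathbf{Y}})\le 1+\eps$ yields the edge-wise bound $\|\tilde y_i-\tilde y_j\|/r_{ij} = O(\sqrt{m/\eps})$. Hence $h_{ij}(\tilde{\mathbf{Y}})\in[-O(\sqrt{m/\eps}),\,1]$, giving width parameters $\ell=1$ and $\rho=O(\sqrt{m/\eps})$. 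Theorem~\ref{thm:AHK} then certifies that the time-average $\tfrac{1}{T}\sum_t\tilde{\mathbf{Y}}^t$ is $(1+\eps)$-feasible after the $T = O(\sqrt{m}\log m/\eps^2)$ iterations prescribed in the algorithm.

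For the runtime, preprocessing the distances $r_{ij}$ costs $O(ma)$. Each iteration is dominated by the Laplacian solve: applying a nearly-linear Laplacian solver to $L(\tilde{\mathbf{Y}}^t)^\top = K\mathbf{Y}^\top$, against $b$ right-hand sides and to accuracy $\mathrm{poly}(\eps/m)$, costs $\tilde O(mb\log(1/\eps))$; weight updates add $O(mb)$. Multiplying by $T$ yields the claimed $O(ma + m^{3/2}(\log m)^2 b\log(1/\eps)/\eps^{5/2})$ total, the extra $1/\sqrt{\eps}$ reflecting the precision required in the Laplacian solve to prevent solver error from compounding across iterations. The main obstacle is the width bound: without the $\eps/m$ regularization there is no uniform control on $\tilde{\mathbf{Y}}$, and showing that the regularization nevertheless perturbs the oracle only by $O(\eps)$ in the aggregate constraint is the key technical step that distinguishes this analysis from that of \OnePt.
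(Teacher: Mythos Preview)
Your approach matches the paper's: the paper simply states that the algorithm and analysis are ``almost identical'' to those for Lipschitz Smoothing (Theorem~\ref{thm:LipSmooth}), and your write-up faithfully reproduces that analysis --- the concave feasibility formulation, the regularized Laplacian oracle, the Cauchy--Schwarz feasibility check, and the width bound $\rho=O(\sqrt{m/\eps})$ are all exactly as in the appendix.

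There is, however, one slip in your runtime accounting. You state $T=O(\sqrt{m}\log m/\eps^2)$ (copying the line in \MultiPt) and then attribute the missing $1/\sqrt{\eps}$ to ``the precision required in the Laplacian solve.'' That is not where it comes from: a nearly-linear Laplacian solver run to accuracy $\mathrm{poly}(\eps/m)$ contributes only a logarithmic factor, already absorbed in $\log(1/\eps)$. The extra $1/\sqrt{\eps}$ is in the iteration count itself. With $\ell=1$ and $\rho=O(\sqrt{m/\eps})$, the MWU bound of \citet{Arora2012the} gives
\[
T \;=\; O\!\left(\frac{\ell\rho\log m}{\eps^2}\right)
\;=\; O\!\left(\frac{\sqrt{m/\eps}\,\log m}{\eps^2}\right)
\;=\; O\!\left(\frac{\sqrt{m}\,\log m}{\eps^{5/2}}\right),
\]
which, multiplied by the per-iteration cost $O(mb\log m\log(1/\eps))$, yields the stated bound. (The $T$ printed in the pseudocode is off by this $1/\sqrt{\eps}$; the analysis in the proof of Theorem~\ref{thm:LipSmooth} has the correct count.)
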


The algorithm and its analysis are almost identical to those for the Lipschitz Smoothing problem. (see Theorem~{\ref{thm:LipSmooth}}).

\subsection{
%ERM and Lipschitz 
Smoothing}
\label{sec:ERM}
\paragraph{Problem statement.}
We reformulate the ERM problem
$\hat f=\argmin_{f\in F_L}\hat R_n(f)$
as follows.
Given two sets of vectors, $(x_i,y_i)_{i\in[n]}$, where $x_i\in X:=\R^a$ and $y_i\in Y:=\R^b$,
we wish to compute a ``smoothed'' version $\tilde y_i$ of the $y_i$'s
so as to
\begin{align*}
\mathrm{Minimize}\ & \Phi(\mathbf{Y}, \widetilde{\mathbf{Y}}) := \sum_{i=1}^n \|y_i - \tilde y_i\|^2\\
\mathrm{subject}\ to\ \ & \|\tilde y_i - \tilde y_j\| \leq 
L \|x_i - x_j\|,   i,j\in[n]
\end{align*}
$\Phi(\mathbf{Y}, \widetilde{\mathbf{Y}}) := \sum_{i=1}^n \|y_i - \tilde y_i\|^2$ is the distortion,
and 
$\|\tilde y_i - \tilde y_j\| \leq 
L %M
\|x_i - x_j\|$ for all $i,j\in[n]$ are the Lipschitz constraints.
Here, $\mathbf{Y} = (y_1,\dots, y_n)$ and $\widetilde{\mathbf{Y}} = (\tilde
y_1,\dots, \tilde y_n)$ (the columns of matrices $\mathbf Y$ and $\widetilde{\mathbf{Y}}$ are vectors $y_1,\dots, y_n$ and $\tilde y_1,\dots, \tilde y_n$, respectively).
Notice that when we use the $L_2$ norm, this
problem is a quadratically constrained quadratic program (QCQP). 
%(Notice that for this section we always take $M=\lip{f}$.)

We consider a more general variant of this problem where we are given a set of
edges $E$ on $\{1,\dots, n\}$, and the goal is to ensure that the Lipschitz constraints
$\|\tilde y_i - \tilde y_j\| \leq 
L% M
\|x_i - x_j\|$ hold (only) for $(i,j) \in E$.
The original problem corresponds to the case when $E$ is the complete graph,
($E_{ij} = L \norm{x_i - x_j}$).
Importantly, if the doubling dimension $\operatorname{ddim}
X$ is low, we can solve the original problem by letting $([n], E)$ be
a $(1+\eps)$-stretch spanner; then $m= n (1/\eps)^{O(\operatorname{ddim})}$ (this approach was previously used by
\citet{gottlieb2017efficient}; see also~\citet[,Section 8.2]{har-peled2006fast},
who used a similar approach to compute the doubling constant).
Our algorithm for
Lipschitz Smoothing iteratively solves Laplace's problem in the graph $G$.
We proceed to
define this problem and present a closed-form formula for
the solution.
\paragraph{Laplace's problem.}\label{LaplaceProblem} We are given vectors $\{y_i\}$, graph $G$, and additionally vertex weights $\lambda_i \geq 0$ (for $i\in[n]$) and edge
weights $\mu_{ij} \geq 0$ (for $(i,j)\in E$), find $\tilde y_i$ so as to
\begin{align*}
\mathrm{minimize}\quad \Psi(\mathbf{Y}, \widetilde{\mathbf{Y}},
\{\lambda_i\}, \{\mu_{ij}\})\equiv \\
\sum_{i=1}^n \lambda_i \|y_i - \tilde
y_i\|^2 + \sum_{(i,j)\in E}\mu_{ij} \|\tilde y_i - \tilde y_j\|^2
.
\end{align*}

\newcommand{\Lap}   {\mathcal{L}}

Let $\Lap$ be the Laplacian of $G = ([n], E)$ with edge weights $\mu_{ij}$; that is $L_{ii} = \sum_{j:j\neq i} \mu_{ij}$
and $L_{ij} = -\mu_{ij}$ for $i\neq j$. Let $\Lambda = \operatorname{diag}(\lambda_1,\dots,\lambda_n)$. Then

$$(\Lap + \Lambda)\, {\widetilde{\mathbf{Y}}}^\top = \Lambda{\mathbf Y}^\top.$$
This equation can be solved separately for each of $b$ rows of ${\mathbf
Y}^\top$ using an nearly-linear equation solver for diagonally dominant matrices by
\citet{koutis2012fast} in total time $O(b m \log n \log (1/\eps))$
(see also the paper by
\citet{spielman2004nearly}, which presented
the first nearly-linear time solve for diagonally dominant matrices).

We
solve the Lipschitz Smoothing problem
via the
multiplicative weight update
algorithm \LipSmooth, presented below.
It was inspired
by the algorithm
for finding maximum flow using electrical networks by
\citet{christiano2011electrical}.

\begin{algorithm}
\caption{\LipSmooth}
\label{alg:Smooth}
\begin{algorithmic}[1]

\Require vectors $\mathbf{X}  = x_1,\dots, x_n \in {\mathbb R}^a$, $\mathbf{Y} = y_1,\dots, y_n \in {\mathbb R}^b$, graph $G = ([n],
E)$, $M$ and $\Phi_0$ 
\newline
%\noindent
\textbf{Output:} 
%\Return
$\widetilde{\mathbf{Y}}$\\
\State \textbf{let } ${\mathbf Y} \equiv (y_1,\dots, y_n)$
\State \textbf{let} $r_{ij} = L \|x_i - x_j\|$ for $(i,j) \in E$
\State \textbf{let} $w_{ij} = 1/(m+1)$ for $(i,j)\in E$, where $m=|E|$\\
\State \textbf{let} $w_{\Phi} = 1/(m+1)$
\State \textbf{let} $T = c_1 \lceil\frac{\sqrt{m} \ln n}{\varepsilon^2}\rceil$ (the number of iterations)
\For{ \textbf{for} $t = 1$ to $T$}
\State \textbf{let} $L$ be the Laplacian of $G$ with edge weights $\mu_{ij} = \frac{w_{ij} + \varepsilon/(m+1)}{r_{ij}^2}$
\State \textbf{let} $\lambda = (w_{\Phi} + \varepsilon/(m+1))/\Phi_0$
\State \textbf{solve} $(\lambda^{-1} \Lap + I)\, ({\widetilde{\mathbf{Y}}}^t)^\top = {\mathbf Y}^\top$ for $\widetilde{\mathbf{Y}}^{t}$
\State \textbf{update the weights:}
         $w_{\Phi}= \left(1 + c_2 \varepsilon \left(\sqrt{\frac{\Phi(\mathbf{Y}, \widetilde{\mathbf{Y}}^t)}{\Phi_0}} -1\right)\right) w_{\Phi}$ \\ \phantom{\textbf{update the weights:}}
         $w_{ij}= \left(1 + c_2 \varepsilon \left(\frac{\|\tilde y_i- \tilde y_j\|}{r_{ij}} -1\right)\right) w_{ij}$ for every $(i, j) \in E$
\State \textbf{normalize the weights:} $W = w_{\Phi} + \sum_{(i,j)\in E} w_{ij}$ \\
\phantom{\textbf{normalize the weigh}} $w_{\Phi}= w_{\Phi}/W$\\ \phantom{\textbf{normalize the weigh}} $w_{ij} = w_{ij}/W$ for all $(i,j) \in E$\\
\EndFor
\State \textbf{return} $\widetilde{\mathbf{Y}} = \frac{1}{T}\sum_{t=1}^{T} \widetilde{\mathbf{Y}}^t$

\end{algorithmic}
\end{algorithm}

\paragraph{Analysis.}
Let $\mathbf{Y^*}$ be the optimal solution to the Lipschitz Smoothing problem and 
and $\Phi_0$ be a $(1+ \varepsilon)$ approximation to the optimal value; that is, 
$$\Phi(\mathbf{Y}, {\mathbf{Y^*}}) \leq  \Phi_0 \leq (1 + \varepsilon) \Phi(\mathbf{Y}, {\mathbf{Y^*}})$$
(we assume that $\Phi_0$ is given to the algorithm; note that $\Phi_0$ can be found by binary search).

As in Section~\ref{sec:lipext}, we use the multiplicative-weight update (MWU) method.
Let
\begin{align*}
  h_{\Phi}(\widetilde{\mathbf{Y}}) &= 1 - \sqrt{\frac{\Phi(\mathbf{Y}, \widetilde{\mathbf{Y}})}{\Phi_0}},
  \quad \\
h_{ij}(\widetilde{\mathbf{Y}}) &= 1 - \frac{\|\tilde y_i - \tilde
  y_j\|}{M\|x_i - x_j\|}, \qquad
(i,j)\in E.
\end{align*}
Note that functions $h_{\Phi}$ and $h_{ij}$ are concave.

Observe that
$h_{\Phi}(\widetilde{\mathbf{Y}}^*) \geq 0$
 and
$h_{ij}(\widetilde{\mathbf{Y}}^*) \geq 0$ for every $(i,j) \in E$. On the other
hand, if $h_{\Phi}(\widetilde{\mathbf{Y}}) \geq -\eps$ and
$h_{ij}(\widetilde{\mathbf{Y}}) \geq -\eps$, then
$$\Phi(\mathbf{Y}, \widetilde{\mathbf{Y}}) \leq (1+\eps)^2 \Phi_0$$ and
$\|\tilde y_i - \tilde y_j\| \leq (1+\eps) L \|x_i - x_j\|$ for every
$(i,j)\in E$.

In the Appendix, we describe the approximation oracle that we invoke in the MWU method.

\begin{theorem}\label{thm:LipSmooth}
There is an algorithm for the Lipschitz Smoothing problem
that runs in time
$$O(ma + m^{3/2}b (\log n)^2 \log (1/\varepsilon)/\eps^{5/2}),$$ where $m=|E|$.
\end{theorem}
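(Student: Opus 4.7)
The plan is to prove Theorem~\ref{thm:LipSmooth} by applying the Arora--Hazan--Kale multiplicative-weights framework (Theorem~\ref{thm:AHK}) to the feasibility system $\{h_{\Phi}(\widetilde{\mathbf{Y}}) \geq 0\} \cup \{h_{ij}(\widetilde{\mathbf{Y}}) \geq 0 : (i,j) \in E\}$, exactly in parallel with the single-point analysis of Section~\ref{sec:lipext}. Since both $h_{\Phi}$ and $h_{ij}$ are concave (the composition of a norm with an affine map, negated, plus a constant), the MWU framework applies. The parameter $\Phi_{0}$ --- a $(1+\varepsilon)$-approximation to the optimal objective value --- is assumed given; it is obtained by binary search over a $\mathrm{poly}(1/\varepsilon)$-sized grid bracketing the trivial lower and upper bounds on the optimum, contributing only a $\log(1/\varepsilon)$ factor to the overall runtime.

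Next I would implement the MWU oracle: given weights $w_{\Phi}$ and $\{w_{ij}\}_{(i,j)\in E}$ summing to one, we must produce $\widetilde{\mathbf{Y}}$ with $w_{\Phi}\,h_{\Phi}(\widetilde{\mathbf{Y}}) + \sum_{(i,j)\in E} w_{ij}\,h_{ij}(\widetilde{\mathbf{Y}}) \geq 0$. Following the same template as in the one-point proof, I would take the oracle to be the minimizer of the weighted quadratic
\begin{equation*}
\lambda\,\Phi(\mathbf{Y},\widetilde{\mathbf{Y}}) + \sum_{(i,j)\in E} \mu_{ij}\,\|\tilde y_i - \tilde y_j\|^{2},
\end{equation*}
with $\lambda = (w_{\Phi} + \varepsilon/(m+1))/\Phi_{0}$ and $\mu_{ij} = (w_{ij} + \varepsilon/(m+1))/r_{ij}^{2}$, as in lines 7--8 of \LipSmooth. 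Two Cauchy--Schwarz inequalities --- one comparing $\sqrt{\Phi(\mathbf{Y},\widetilde{\mathbf{Y}})/\Phi_{0}}$ against the weighted-average form, the other comparing $\sum w_{ij}\|\tilde y_i - \tilde y_j\|/r_{ij}$ against the $L^{2}$ aggregate --- together with the optimality of $\widetilde{\mathbf{Y}}$ against the feasible witness $\widetilde{\mathbf{Y}}^{*}$, yield the required oracle inequality in essentially the same way as in Section~\ref{sec:lipext}. This minimization is exactly Laplace's problem of Section~\ref{LaplaceProblem} and reduces to the SDD linear system $(\lambda^{-1} L + I)(\widetilde{\mathbf{Y}}^{t})^{\top} = \mathbf{Y}^{\top}$, solvable by the Koutis--Miller--Peng solver in time $O(bm\log n\log(1/\varepsilon))$ per invocation.

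The main obstacle --- and the reason the exponent of $\varepsilon$ is $5/2$ rather than $2$ --- is bounding the width $\rho$ of this oracle. Unlike the one-point case, where the candidate $\widetilde{\mathbf{Y}}$ lies in an explicit ball and the width is an absolute constant, here the Laplace solution can violate an individual constraint $h_{ij}$ or $h_{\Phi}$ by a factor that depends on $m$ and on the regularization terms $\varepsilon/(m+1)$ added to each weight. I would adapt the width argument of \citet{christiano2011electrical}: the explicit $\varepsilon/(m+1)$ regularization forces each edge weight to be at least $\varepsilon/(m+1)$, which in turn caps the per-edge energy in the optimal Laplacian solution and yields the bound $\rho = O(\sqrt{m/\varepsilon})$. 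Plugging this into Theorem~\ref{thm:AHK} gives $T = O(\rho \log m/\varepsilon^{2}) = O(\sqrt{m}\log n/\varepsilon^{5/2})$ iterations (matching line~5 of the algorithm up to the extra $\varepsilon^{-1/2}$ absorbed into the width). Multiplying $T$ by the $O(bm\log n\log(1/\varepsilon))$ per-iteration cost of the SDD solve, and adding the $O(ma)$ setup cost for the edge lengths $r_{ij}$, yields the claimed runtime $O(ma + m^{3/2} b (\log n)^{2}\log(1/\varepsilon)/\varepsilon^{5/2})$. Translating the resulting $-\varepsilon$-approximate feasibility back through the definitions of $h_{\Phi}$ and $h_{ij}$ (as already observed immediately before the theorem) gives $\Phi(\mathbf{Y},\widetilde{\mathbf{Y}}) \leq (1+\varepsilon)^{2}\Phi_{0}$ and Lipschitz violation at most $(1+\varepsilon)$, completing the proof.
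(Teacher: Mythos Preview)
Your proposal is correct and follows the paper's approach essentially verbatim: MWU on the concave constraints $h_\Phi, h_{ij}$, oracle via the regularized Laplace problem solved by Koutis--Miller--Peng, width $O(\sqrt{m/\varepsilon})$ from the $\varepsilon/(m+1)$ padding in the style of \citet{christiano2011electrical}, and identical runtime arithmetic. The only slip is that with the padding in place the oracle guarantees only $w_\Phi h_\Phi + \sum_{(i,j)\in E} w_{ij} h_{ij} \ge -\varepsilon$ rather than $\ge 0$ (the $m{+}1$ regularization terms contribute an extra $\varepsilon$ when you compare against the witness $\widetilde{\mathbf{Y}}^*$), so the paper invokes Theorem~3.5 of \citet{Arora2012the}---the approximate-oracle variant---instead of Theorem~3.4 (Theorem~\ref{thm:AHK}); this does not change the iteration count or the conclusion.
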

\begin{proof}
{\it Proof of Theorem \ref{thm:LipSmooth}.}
From Theorem 3.5 in~\citet{Arora2012the}, we get that the algorithm finds an $O(\eps)$ approximate
solution in
$T = O\left(\frac{\sqrt{m/\eps} \ln m}{\eps^2}\right) = \left(\frac{\sqrt{m}}{\eps^{5/2}}\right)$
iterations. Each iteration takes $O(b m \log n \log (1/\eps))$ time (which is dominated by the time necessary to solve Laplace's problem); additionally, we spend time $O(am)$ to compute pairwise distances between points in $X$.
\qed
\end{proof}

\section{Experiments}
\label{sec:experiments}

To illustrate the utility of our framework, we designed two simple non-linear transformation problems where the input and output are both scalars. Our data was generated uniformly at random over $[-2\pi,2\pi]$ and evaluated the performance on two cases: $f(x) = x^3$ and $f(x) = sin(x)$. 
\paragraph{Results.} In order to perfrom a fair, apples-to-apples comparison, we implemented both Algorithms ~\ref{alg:Smooth} and \ref{alg:One} in Matlab, which standard, optimzied QCQP solvers, and performed the regression problem via the Kirszbraun extension technique. We compared the results of this learning method when using our methods for the optimization problems (MWU) vs using Matlab's QCQP solver based on the interior-point algorithm (IntPt). We considered the squared Euclidean distance as the loss function. We ran several tests using different size data sets of 20, 100, 200, 500, and 1000 random points as training set, and 100 test points in all experiments. For reproducibility, we've used Matlab's random seed 1 in all our runs. All the tests where conducted on the same Macbook pro computer. The numeric comparison (Tables 1-5) shows undoubtedly supremacy of the MWU over the IntPt method both in efficiency and better learning. MWU method is able to optimize a data set of several thousands data points, while the IntPt based method could not complete its process in ``reasonable'' time (over 10 hours night run) with more than $N=200$ training points. In terms of solving the learning problem, the MWU able to solve the QCQP problem and produce accurate smoothing and more accurate extensions functions as the data size grows. The IntPt method, on the other hand, able to meet all the constrains of the problem only with very small data set (less then 50 training points) which is insufficient data to for learning. Tables 1-5 shows that for 20 training points, the IntPt is able to train in 2.474 seconds and completely over fit the data set with 0 ERM, which leads to expected very poor generalization due to the size of the data (Table 5).  On larger datasets the IntPt optimization fails to correctly solve the optimization problems with respect to all of the constrains. This result in several ``heavy'' outliers which affect heavily on the average square error of both smoothing and extension phases as can be seen in tables 1-5. Table 6 shows a graphical comparison for both implementations when the training set has $N=100$ points. The ``heavy outliers'' can be spotted easily on the graph, and explain why the same learning algorithm has such big differences when optimised with two different methods.

Tables 1-6 summarise the results for $f(x) = x^3$. The results for $f(x) = \sin{x}$ are showing the same basic pattern and were added to the appendix. 
%\begin{comment}
%\armin{WHY TO MOVE TO THE APPENDIX?I THINK ALL EXPERIMENTS SHOULD BE IN THE %MAIN TEXTS}
%\AK{Since MOR is theoretically oriented, putting too many experiments in the main text might distract from the more relevant contributions. I suggest keeping those in the Appendix. Yury?}
%\end{comment}
The blank entries in the tables indicate that the process did not terminate in the time allotted (12 hours).\\
%\armin{(1) THE MATLAB RESULTS FOR $n=200$ SEEM TO BE BETTER. ANY COMMENTS NEEDED?}\\ -> corrected
% \armin{(2) A DISCUSSION AND CONCLUSION SECTION IS MISSING.} -> added
% \armin{(3) TABLES: I WOULD PUT THE 'NAME' OF THE ALGORITHM IN THE FIRST ROW FOLLOWED BY 'TRAINING POINTS' ETC}

% \armin{(4) WE CLAIM TO HAVE A NON-PARAMETRIC REGRESSION METHOD THAT IS VECTOR-VALUED. EXAMPLES ARE ALL ONE-DIMENSIONAL. COMMENTS ON THAT?}\\ -> it was more complex to create high dimensional problem that can also be graphical and suitable for our problem. In any case our experiments focus on the optimization improvement thus any qcqp problem support our case.
% \armin{(5) SHOULD EQUATIONS BE NUMBERED THROUGHOUT THE MANUSCRIPT?} -> we numbered only the equations that we referenced to

\begin{table}[h!]
  \begin{center}
    \caption{ERM of the smoothing process}
%    \label{tab:sm}
    \begin{tabular}{l|c|c|c|c|c} %
     training points & 20 & 100 & 200 & 500 & 1000 \\ \hline
     \textbf{Algorithm} & \multicolumn{5}{c}{\textbf{Avg. loss}} \\
      \hline
      MWU & 247.9405 & 0.3333 & 0.31581 & 0.31854 & 0.36143\\
      IntPt & 4.1e-18 & 46023.7964 & 353691.64 & & \\
  
    \end{tabular}
  \end{center}
\end{table}

\begin{table}[h!]
  \begin{center}
    \caption{Cross validation running time over the smoothing process in seconds}
%    \label{tab:res}
    \begin{tabular}{l|c|c|c|c|c} %
     training points & 20 & 100 & 200 & 500 & 1000 \\ \hline
     \textbf{Algorithm} & \multicolumn{5}{c}{\textbf{Avg. loss}} \\
      \hline
      MWU & 2.7505 & 19.9428 & 46.2479 & 212.4875 & 1243.2395\\
      IntPt & 18.1733 & 692.6523 & 4087.6655 & & \\
  
    \end{tabular}
  \end{center}
\end{table}

\begin{table}[h!]
  \begin{center}
    \caption{Single$^*$ smoothing process running time in seconds}
%    \label{tab:res}
    \begin{tabular}{l|c|c|c|c|c} %
     training points & 20 & 100 & 200 & 500 & 1000 \\ \hline
     \textbf{Algorithm} & \multicolumn{5}{c}{\textbf{Avg. loss}} \\
      \hline
      MWU & 0.092 & 0.7051  & 1.6326 & 8.0798 & 45.1497 \\
      IntPt & 2.474 & 155.5521  & 766.5433 & & \\
  
    \end{tabular}
  \end{center}
\end{table}

\begin{table}[h!]
  \begin{center}
    \caption{Extension avg loss}
%    \label{tab:res}
    \begin{tabular}{l|c|c|c|c|c} %
     training points & 100 & 200 & 500 & 1000 \\ \hline
     \textbf{Algorithm} & \multicolumn{5}{c}{\textbf{Avg. loss}} \\
      \hline
      MWU & 1119.4705 & 0.3333 & 0.37267 & 0.43678 & 0.52797\\
      IntPt & 3065.5698 & 9475.260 & 9475.4864 & & \\
  
    \end{tabular}
  \end{center}
\end{table}

\begin{table}[h!]
  \begin{center}
    \caption{Extension running time for all test points in seconds}
%    \label{tab:res}
    \begin{tabular}{l|c|c|c|c|r} %
     training points & 20 & 100 & 200 & 500 & 1000 \\ \hline
     \textbf{Algorithm} & \multicolumn{5}{c}{\textbf{Avg. loss}} \\
      \hline
      MWU & 0.054& 0.0014903  & 0.002826& 0.0051038  & 0.0089853 \\
      IntPt & 1.3367 & 1.6918  & 2.6156  & & \\
  
    \end{tabular}
  \end{center}
\end{table}

\begin{table}[h!]
\centering
 \caption{Visual comparison between our MWU based algorithm (first row) and IntPt (Matlab's) based algorithm (second row). For $f=x^3$ and $N=100$ random points. In All graphs the blue line represents the ground truth function $f=x^3$ while the orange 'x' symbols represent the estimation of the data points by the learned function. It is possible to see that while the MWU based algorithm was able to fit both the training and test set in high accuracy, the Intpt method has several ``heavy'' outliers which reduce significatly its average squared error}%    \label{tab:res}
    \begin{tabular}{l|c|c|} %
      & \textbf{Smoothing} & \textbf{Extension} \\ \hline
     \centering\textbf{MWU} & \includegraphics[width=1.8in] {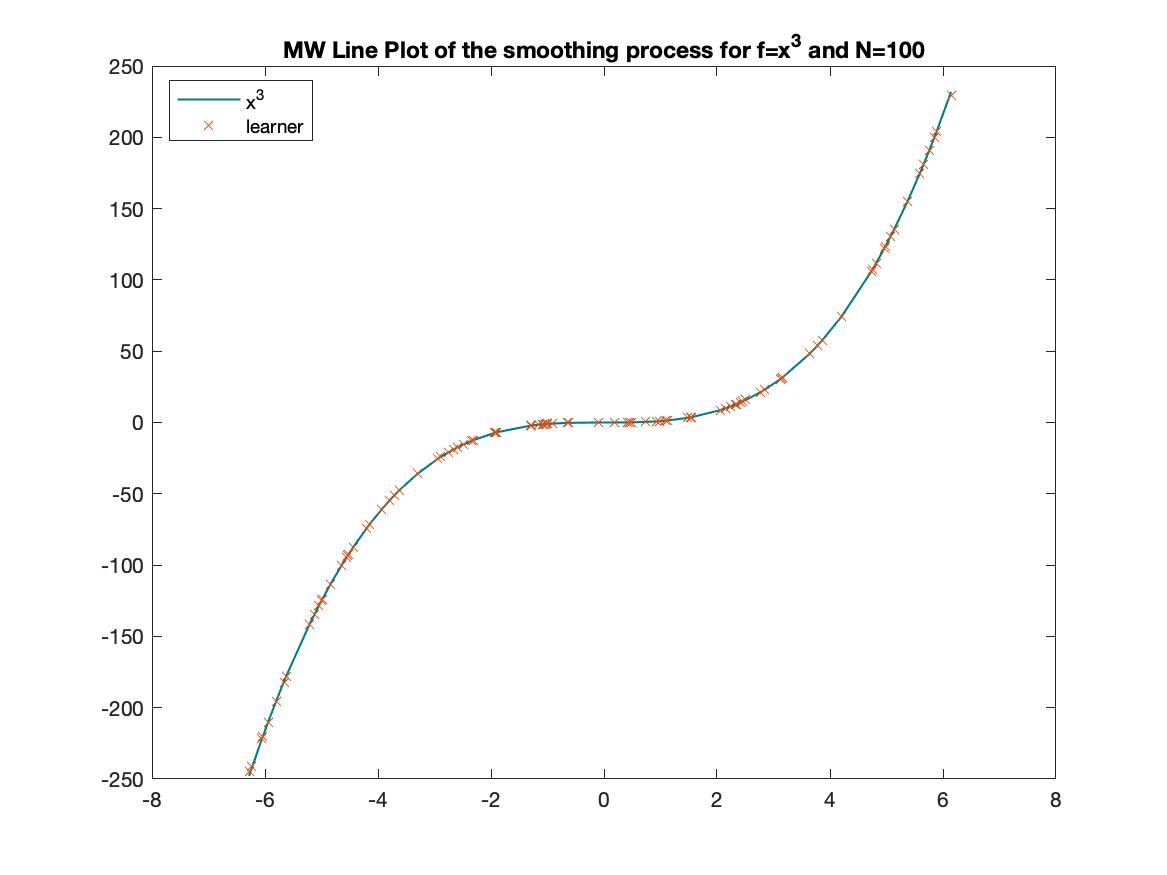}
    	 & \includegraphics[width=1.8in] {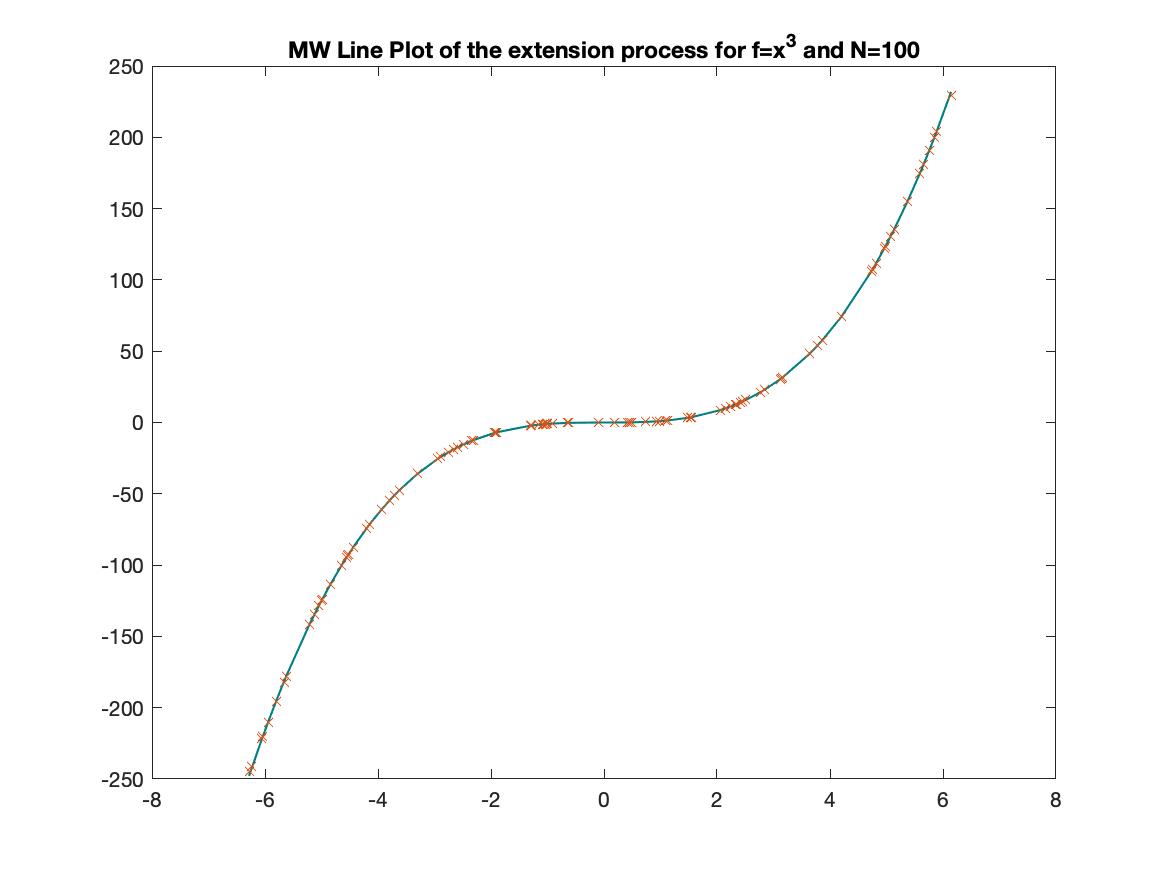} \\ \hline
    \textbf{IntPt} & \includegraphics[width=1.8in] {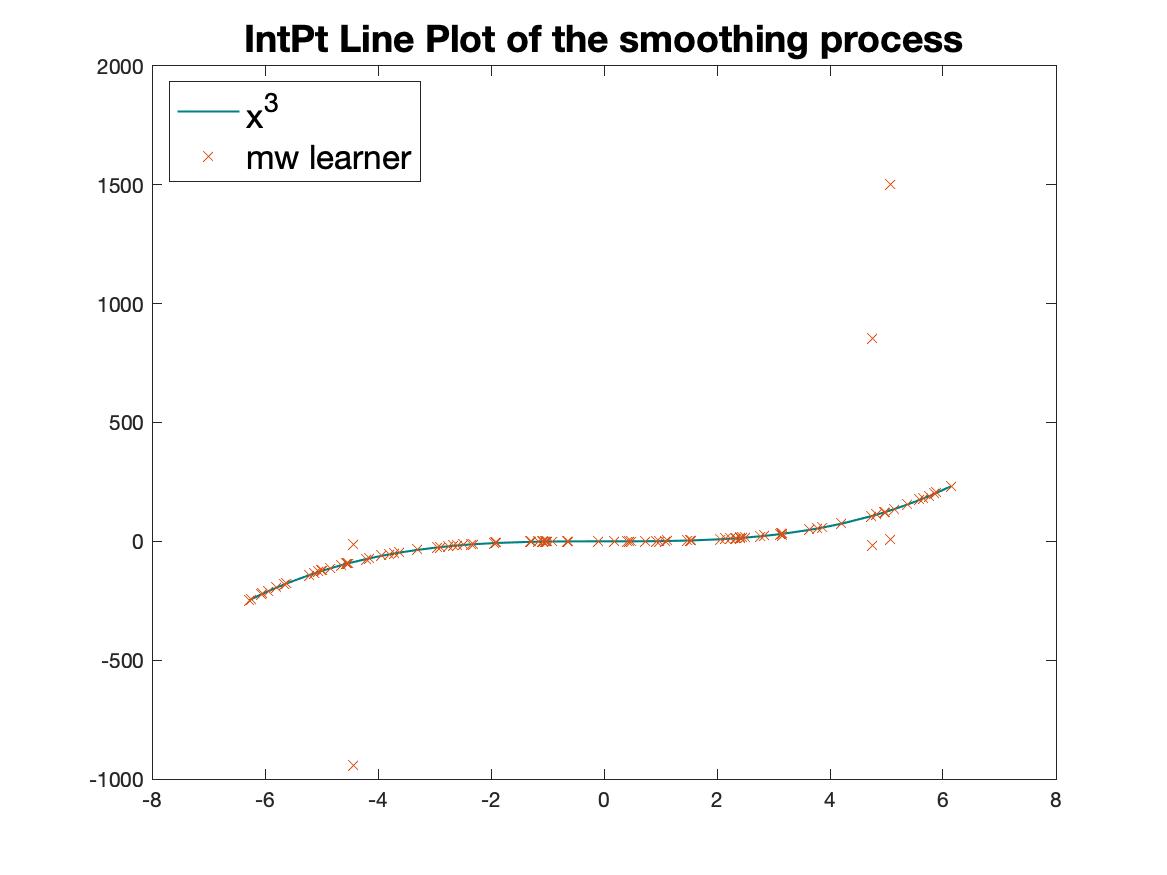}
    	 & \includegraphics[width=1.8in] {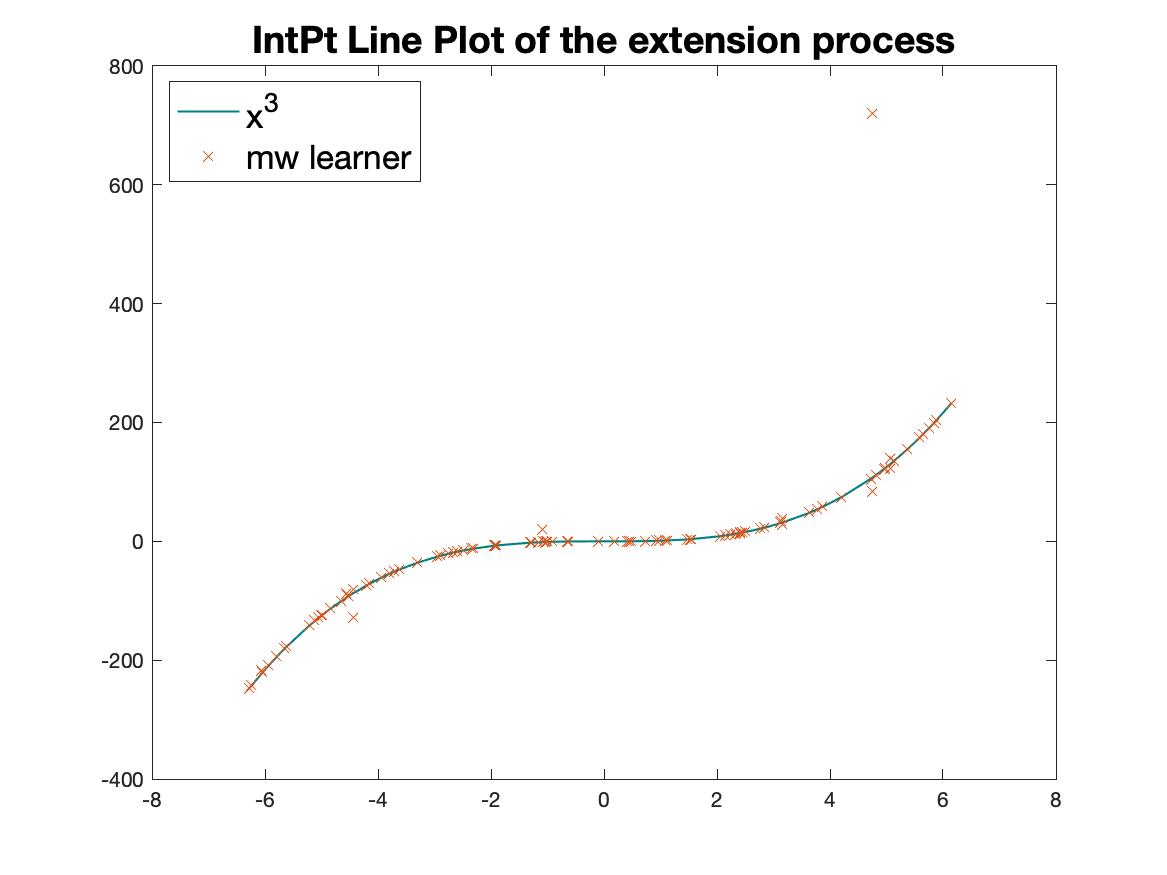}	\\ 
    \end{tabular}
\end{table}

\begin{comment}
\begin{figure}[h]
  \centering
  	\begin{subfigure}[b]{0.4\linewidth}
    	\includegraphics[width=\linewidth] {mw_100_x_3_smoothing.jpg}
    	\caption{}
  	\end{subfigure}
  	\begin{subfigure}[b]{0.4\linewidth}
    	\includegraphics[width=\linewidth] {mw_100_x_3_extension.jpg}
		\caption{}
  	\end{subfigure}
  	\begin{subfigure}[b]{0.4\linewidth}
    	\includegraphics[width=\linewidth] {100_x_3_Intpt_smoothing.jpg}
		\caption{}
  	\end{subfigure}\begin{subfigure}[b]{0.4\linewidth}
    	\includegraphics[width=\linewidth] {100_x_3_Intpt_extension.jpg}
		\caption{}
  	\end{subfigure}
  \caption{Figures (a)-(d) shows the comparison between our MWU based algorithm ((a)-(b)) and IntPt (Matlab's) based algorithm ((c)-(d)). for $f=x^3$ and $N=200$ random points. In All graphs the blue line represents the ground truth function $f=x^3$ while the orange 'x' symbols represent the estimation of the data points by the learned function 
  %\armin{FIGURES AND LEGENDS ARE  TOO SMALL. }
  }
  \label{fig:fig1}
\end{figure}
\end{comment}

\section{Discussion and Conclusions}
This work introduces a framework for performing regression between two Hilbert spaces based  on Kirszbraun's extension theorem, along with statistical analysis for this method. This task is decomposed into two stages: Smoothing (which corresponds to the training) and prediction (which achieved via Kirszbraun extension). Numerically solving our optimization problems has indicated a need for a more efficient solver for our optimization problems than off the shelf state-off-the-art solvers. We introduced two optimization algorithms, one for the smoothing problem and one for the extension, both are solved algorithmically via novel MWU schemes. Both analysis and experiments shows dramatically run time improvement for both optimization problems thus indicating that this algorithms are the main contribution off this work and are interest topic for future research on their own. Our code is also provided for reproducibility and to facilitate usage.

\paragraph{Acknowledgements.}
%If you'd like to thank anyone, place your comments here
%and remove the percent signs.
AK was partially supported by
the Israel Science Foundation
(grant No. 1602/19),
the Ben-Gurion University Data Science Research Center,
and an Amazon Research Award.
HZ was an MSc student at Ben-Gurion University of the Negev during part of this research.

\newpage

\newpage
%\begin{APPENDICES}

\appendix

%\section*{Appendix}

\newcommand{\oracle}{\textup{{\textsf{Oracle}}}}
\newcommand{\x}{\mathbf{x}}

\section{The Arora-Hazan-Kale result}
\label{sec:AHK}
For completeness, we quote here verbatim (except for the numbering)
the relevant definitions and results from
\cite[Sec. 3.3.1, p. 137]{Arora2012the}.

Imagine that we have the following feasibility problem:
\begin{equation}
  \label{eq:ahk-feas}
\exists?\x \in \calP:\> \forall i\in [m]:\> f_i(\x) \geq 0,
\end{equation}
where $\calP \in \mathbb{R}^n$ is a convex domain, and for $i\in [m]$, $f_i:\calP \rightarrow \mathbb{R}$ are concave functions. We wish to satisfy this system approximately, up to an additive error of $\eps$. We assume the existence of an \oracle, which, when given a probability distribution
$\mathbf{p} = (p_1,p_2,\ldots,,p_m)$ solves the following feasibility problem:
\begin{equation}
    \label{eq:ahk-feas2}
\exists?\x \in \calP:\> \sum_i p_i f_i(\x) \geq 0.
\end{equation}
An \oracle\ is said to be called $(\ell,\rho)$-bounded if there is a fixed subset of constraints $I \subseteq [m]$ such that whenever it returns a feasible solution $\x$ to (\ref{eq:ahk-feas2}), all constraints $i\in I$ take values i the range $[-\ell,\rho]$ on the point $\x$, and all the rest take values in $[-\rho,\ell]$.

\begin{theorem}[Theorem 3.4 in \citet{Arora2012the}]
  \label{thm:AHK}
  Let $\eps>0$ be a given error parameter. Suppose there exists an $(\ell,\rho)$-bounded \oracle\ for the
  feasibility problem (\ref{eq:ahk-feas}).
  Assume the $\ell\geq \eps/2$. Then there is an algorithms which either solves the problem up to an additive error of $\eps$, or correctly concludes that the system s infeasible, making only $O(\ell\rho\log(m)/\eps^2)$ calls to the \oracle, with an additional processing time of $O(m)$ per call.
\end{theorem}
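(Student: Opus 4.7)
The plan is to reduce the claim to a direct invocation of the standard MWU regret guarantee, with the $m$ constraints playing the role of experts. I would initialize $w^{(1)}_i = 1$; at each round $t$, set $p^{(t)}_i = w^{(t)}_i / \sum_j w^{(t)}_j$ and call the $(\ell,\rho)$-bounded oracle on $p^{(t)}$. If the oracle ever reports (\ref{eq:ahk-feas2}) infeasible, the original problem (\ref{eq:ahk-feas}) must also be infeasible — any $\mathbf{x}$ satisfying all $f_i(\mathbf{x})\geq 0$ trivially satisfies $\sum_i p_i f_i(\mathbf{x})\geq 0$ for every $\mathbf{p}$ — and the algorithm halts with ``infeasible.'' Otherwise, the oracle returns $\mathbf{x}^{(t)}\in\calP$ with $\sum_i p^{(t)}_i f_i(\mathbf{x}^{(t)})\geq 0$, and I perform the multiplicative update $w^{(t+1)}_i = w^{(t)}_i\bigl(1-\eta f_i(\mathbf{x}^{(t)})/\rho\bigr)$ with a step size $\eta$ to be chosen; the sign is set so that constraints on which $\mathbf{x}^{(t)}$ performed poorly are upweighted in subsequent rounds. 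After $T$ iterations, output $\bar{\mathbf{x}} = \frac{1}{T}\sum_t \mathbf{x}^{(t)}$, which lies in $\calP$ by convexity of the domain.

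The analysis is the usual potential-function argument on $\Phi_t := \sum_i w^{(t)}_i$. Bounding $\Phi_{T+1}$ from above using the oracle guarantee $\sum_i p^{(t)}_i f_i(\mathbf{x}^{(t)})\geq 0$ (so that each step multiplies $\Phi$ by at most $1 + O(\eta^2)\cdot\mathrm{width}$) and from below by a single weight $w^{(T+1)}_{i^*}$ for any fixed $i^*\in[m]$, then taking logs and rearranging, would yield
\begin{equation*}
\frac{1}{T}\sum_{t=1}^T f_{i^*}(\mathbf{x}^{(t)}) \;\geq\; -\,O\!\bigl(\eta\,\mathrm{width}_{i^*}\bigr) \;-\; \frac{\rho \ln m}{\eta T}.
\end{equation*}
By concavity of $f_{i^*}$ on $\calP$, $f_{i^*}(\bar{\mathbf{x}}) \geq \frac{1}{T}\sum_t f_{i^*}(\mathbf{x}^{(t)})$; hence once the right-hand side exceeds $-\eps$ uniformly in $i^*$, the averaged point $\bar{\mathbf{x}}$ is an $\eps$-approximate feasible solution.

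The step I expect to require the most care — and the reason the bound is $T=O(\ell\rho\log m/\eps^2)$ rather than the naive $O(\rho^2\log m/\eps^2)$ — is exploiting the asymmetric $(\ell,\rho)$-width. For $i\in I$ the oracle constrains $f_i(\mathbf{x}^{(t)})\in[-\ell,\rho]$, so the ``weight-decrease'' side of the update is governed by a factor bounded by $\ell$ rather than $\rho$; for $i\notin I$ the roles of $\ell$ and $\rho$ swap. Splitting the potential argument along these two regimes — or equivalently factoring $1-\eta z$ into its positive and negative parts before taking logs, via $1-\eta z \leq (1-\eta)^{(1+z/\rho)/2}(1+\eta)^{(1-z/\rho)/2}$-type identities after rescaling — replaces one factor of $\rho$ in the width term above by $\ell$. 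Balancing the two error contributions with $\eta = \Theta\bigl(\sqrt{\ln m/(\ell\rho T)}\bigr)$ and forcing the overall shortfall to be at most $\eps$ then yields $T=O(\ell\rho\log m/\eps^2)$. The hypothesis $\ell\geq\eps/2$ enters at precisely the point where the quadratic correction $\eta^2\ell\rho$ must be absorbed into the target additive error $\eps$ — without it, the second-order term could dominate the first-order gain from the oracle guarantee. Finally, aside from the single oracle call per round, the per-iteration work is $O(m)$: reading off $(f_i(\mathbf{x}^{(t)}))_{i\in[m]}$ and updating and renormalizing the $m$ weights.
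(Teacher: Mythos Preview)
The paper does not prove this theorem. It is quoted verbatim in Appendix~\ref{sec:AHK} as an external result from \citet[Theorem~3.4]{Arora2012the}, purely for reference, and is invoked as a black box in the analyses of Theorems~\ref{thm:lip_ext} and~\ref{thm:LipSmooth}. So there is no ``paper's own proof'' to compare your proposal against.

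That said, your sketch tracks the standard Arora--Hazan--Kale argument: MWU over the $m$ constraints with a potential-function analysis, Jensen/concavity to pass from the iterates to the averaged point, and the asymmetric $(\ell,\rho)$-width to sharpen the naive $\rho^2$ dependence to $\ell\rho$. The one place where your outline is genuinely vague is the asymmetric-width step: the sentence about ``$1-\eta z \leq (1-\eta)^{(1+z/\rho)/2}(1+\eta)^{(1-z/\rho)/2}$-type identities after rescaling'' and ``splitting the potential argument along these two regimes'' gestures at the mechanism without actually carrying it out. In the AHK proof this is handled by using two different linear-in-exponent bounds for the update factor depending on the sign of the loss, so that the second-order slack is controlled by the product $\ell\rho$ rather than $\rho^2$; you would need to make that computation explicit for the argument to go through. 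Your identification of where the hypothesis $\ell\geq\eps/2$ enters is correct.
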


\section
{Approximate oracle}
To use the MWU method (see  Theorem 3.5 in~\citet{Arora2012the}), we design an approximate oracle for the following problem.

\begin{problem}\label{prob:oracle-LipSmooth}
Given non-negative edge weights $w_{\Phi}$ and  $w_{ij}$, which add up to 1, find $\widetilde{\mathbf{Y}}$ such that
\begin{equation}
w_{\Phi} h_{\Phi}(\widetilde{\mathbf{Y}}) + \sum_{(i,j)\in E} w_{(i,j)} h_{(i,j)}(\widetilde{\mathbf{Y}}) \geq - \eps. \label{eq:oracle-LipSmooth}
\end{equation}
\end{problem}

Let $\mu_{ij} = \frac{w_{ij} + \eps/(m+1)}{M^2\|x_i - x_j\|^2}$ and $\lambda_i = \lambda = (w_{\Phi} + \eps/(m+1))/\Phi_0$.
We solve Laplace's problem with parameters $\mu_{ij}$ and $\lambda_i$ (see
Section~\ref{LaplaceProblem} and Line 9 of the algorithm). We get a
matrix $\widetilde{\mathbf{Y}} =  (\tilde y_1,\dots, \tilde y_n)$ minimizing
$$\lambda\sum_{i=1}^n \|y_i - \tilde
y_i\|^2 + \sum_{(i,j)\in E}\mu_{ij} \|\tilde y_i - \tilde y_j\|^2.$$
Consider the optimal solution $\tilde y_1^*,\dots, \tilde y_n^*$ for Lipschitz
Smoothing. We have
%\begin{align}
\beq
\label{eq:u-bound-h}
\lambda  \sum_{i=1}^n \|y_i - \tilde y_i\|^2 +
\sum_{(i,j)\in E}\mu_{ij} \|\tilde y_i - \tilde y_j\|^2 
%\\ 
%\notag
&\leq& \lambda  \sum_{i=1}^n \|y_i - \tilde y_i^*\|^2 + \sum_{(i,j)\in E} \mu_{ij} \|\tilde y_i^* - \tilde
y_j^*\|^2
\\ 
%\notag
& \leq&  (w_{\Phi} + \eps/(m+1))\ + 
%\\ \notag
%&\>\> 
\sum_{(i,j)\in E} \left(w_{ij} + \frac{1}{m+1}\right)\frac{\|\tilde y_i^* - \tilde y_j^*\|}{M^2 \|x_i - x_j\|^2}
\\
&\leq&
\left(w_{\Phi}+\sum_{(i,j)\in E} w_{ij}\right) + \eps = 1 + \eps
.
\eeq
%\end{align}

We verify that $\widetilde{\mathbf{Y}}$ is a feasible solution for Problem~\ref{prob:oracle-LipSmooth}.
We have
%\begin{align*}
\beq
1 - \left(w_{\Phi} h_{\Phi}(\widetilde{\mathbf{Y}}) + \sum_{(i,j)\in E} w_{(i,j)} h_{(i,j)}(y)\right)
&=& 
w_{\Phi}(1 - h_{\Phi}) + \sum_{(i,j)\in E} w_{(i,j)} (1 - h_{(i,j)}(y))\\
&\leq&
\sqrt{w_{\Phi}(1 - h_{\Phi})^2 + \sum_{(i,j)\in E} w_{(i,j)} (1 - h_{(i,j)}(y))^2}
\\&=&
\sqrt{w_{\Phi} \frac{\Phi(\mathbf{Y}, \widetilde{\mathbf{Y}})}{\Phi_0} + \sum_{(i,j)\in E} w_{(i,j)}  \frac{\|\tilde y_i - \tilde
y_j\|^2}{M^2\|x_i - x_j\|^2}}
\\
&\leq&
\sqrt{\lambda \Phi(\mathbf{Y}, \widetilde{\mathbf{Y}})+ \sum_{(i,j)\in E} w_{(i,j)} \mu_{ij}\|\tilde y_i - \tilde
y_j\|^2} 
\\
&\leq& \sqrt{1+\eps} \leq 1 + \eps,
%\end{align*}
\eeq
as required.

Finally, we bound the width of the problem. We have $h_{\Phi}(\widetilde{\mathbf{Y}}) \leq 1$ and $h_{ij}(\widetilde{\mathbf{Y}}) \leq 1$.
Then, using (\ref{eq:u-bound-h}), we get
%\begin{align*}
\beq
(1 - h_{\Phi}(\widetilde{\mathbf{Y}}))^2 
= 
\frac{1}{\Phi_0}\sum_{i=1}^n \|y_i - \tilde y_i\|^2 \leq \frac{1+\eps}{\lambda\Phi_0}  
\leq  
\frac{(1+\eps)(m+1)}{\eps}
.
\eeq
%\end{align*}
Therefore, $-h_{\Phi}(\widetilde{\mathbf{Y}}) \leq O(\sqrt{m/\eps})$.

Similarly,
\beq
%\begin{align*}
(1 - h_{ij}(\widetilde{\mathbf{Y}}))^2 
= 
\frac{\|y_i - \tilde y_i\|^2}{M^2\|x_i - x_j\|^2} \leq \frac{1+\eps}{\mu_{ij} \cdot M^2\|x_i - x_j\|^2}
\leq \frac{(1+\eps)(m+1)}{\eps}
.
\eeq
%\end{align*}

Therefore, $-h_{ij}(\widetilde{\mathbf{Y}}) \leq O(\sqrt{m/\eps})$.

\section{Generalization bounds
}
\label{sec:gen-bds}
Let $ \X\subset\R^k $ and $\Y\subset\R^\ell$ be
the unit balls of their respective
Hilbert spaces (each endowed with
the $\ell_2$ norm $\norm{\cdot}$ and corresponding inner product)
and $\H_L\subset\Y^\X$ be the set of all $L$-Lipschitz mappings from $\X$ to
$\Y$. In particular, every $h\in\H_L$ satisfies
\beq
\norm{ h(x)-h(x')} \le L\norm{x-x'},
\qquad x,x'\in\X.
\eeq

Let $\F_L\subset\R^{\X\times\Y}$
be the {\em loss class} associated with $\H_L$:
\beq
%\begin{align*}
\F_L= \{ \X\times\Y\ni(x,y)
\mapsto f(x,y)=f_h(x,y)
:=\norm{ h(x)-y}; h\in\H_L \}
.
\eeq
%\end{align*}
In particular, every $f\in\F_L$ satisfies $0\le f\le 2$.

Our goal is to bound the Rademacher complexity of $\F_L$. We do this via a covering numbers approach.

The empirical \emph{Rademacher complexity} of a collection of functions $\F$ mapping some set ${Z_1,\dots,\Z_n}\subset\Z^n$
to $\R$ is defined by:
\begin{equation}
\hat{\cR}(\F;\Z)= \E \Big [ \sup_{f\in \F} \frac{1}{n} \sum_{i=1}^n \sigma_if(\Z_i)\Big ].
\end{equation}

Recall the relevance of Rademacher complexities to uniform deviation estimates for the risk functional
$R(\cdot)$
\citep[Theorem 3.1]{mohri2012foundations}:
for every $\delta>0$, with probability at least $1-\delta$, for each $h\in \H_L$:
\begin{equation}
R(h(z))\leq \hat R_n(h(z)) + 2\hat{\cR}_n(\hat{\F_L}) + 6 \sqrt{\frac{\ln (2/\delta)}{2n}}.
\end{equation}

Define $\Z=\X\times\Y$ and endow it with the norm
$\nrm{(x,y)}_\Z=\nrm{x}+\nrm{y}$;
note that $(\Z,\nrm{\cdot}_\Z)$ is a Banach but not a Hilbert space.
First, we observe that the functions in $\F_L$ are Lipschitz
under $\nrm{\cdot}_\Z$.
Indeed,
choose any $f=f_h\in\F_L$ and $x,x'\in\X$, $y,y'\in\Y$. Then
\beq
\abs{
f_h(x,y)-f_h(x',y')
}
&=&
\abs{
  \nrm{ h(x)-y}
  -
  \nrm{ h(x')-y'}
}\\
&\le&
\nrm{
  ( h(x)-y)
  -
  ( h(x')-y')
}\\
&\le&
\nrm{h(x)-h(x')}
+
\nrm{y-y'}\\
&\le&
L\nrm{x-x'}+\nrm{y-y'}\\
&\le& (L\vee1)\nrm{ (x,y)-(x',y')}_\Z,
\eeq
where $a\vee b:=\max\set{a,b}$.
We conclude that any $f\in\F_L$
is $(L\vee1) < L+1 $-Lipschitz
under $\nrm{\cdot}_\Z$.

Since we restricted the domain and range of $\H_L$, respectively,
to the unit balls $B_\X$ and $B_\Y$,
the domain of $\F_L$
becomes
$B_\Z:=B_\X\times B_\Y$
and its range is $[0,2]$.
Let us recall some basic facts about the $\ell_2$ covering
of the $k$-dimensional unit ball
\beq
\cN(t,B_\X,\nrm{\cdot}) \le (3/t)^k;
\eeq
an analogous bound holds for
$\cN(t,B_\Y,\nrm{\cdot})$.
Now if $\C_\X$ is a collection of balls, each of diameter at most $t$,
that covers $B_\X$ and $\C_\Y$ is a similar collection covering $B_\Y$,
then clearly the collection of sets
\beq
\C_\Z:=\set{ E=F\times G \subset\Z: F\in \C_\X, G\in\C_\Y}
\eeq
covers $B_\Z$. Moreover, each $E\in\C_\Z$ is a ball
of diameter at most $2t$ in $(\Z,\nrm{\cdot}_\Z)$.
It follows that
\beq
\cN(t,B_\Z,\nrm{\cdot}_Z) \le (6/t)^{2k}.
\eeq

Finally, we endow $F_L$ with the $\ell_\infty$ norm, and
use a Kolmogorov-Tihomirov type covering estimate
(see, e.g., \citet[Lemma 5.2]{gottlieb13adaptive}):
\beq
\log \cN(t,F_L,\nrm{\cdot}_\infty) \le
(96(L+1)/t)^{2k}\log(8/t).
\eeq

We can now use
\citet[Theorem 4.3]{gottlieb13adaptive}):
\begin{theorem}
  Let $\F_L$ be the collection of {\em $L$-Lipschitz}
  $[0, 2]$-valued functions defined on a metric space $(\Z, \norm{\cdot}_Z)$
  with diameter $1$ and doubling dimension $d$.
  Then $\hat{R}_n(F_L; \Z) = O\big (\frac{L}{n^{1/(d+1)}}\big ) $.
\end{theorem}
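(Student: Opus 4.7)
The plan is to bound the empirical Rademacher complexity of $\F_L$ by invoking Dudley's chaining integral, fed by the $\ell_\infty$ covering number estimate
\[
\log \cN(t, \F_L, \nrm{\cdot}_\infty) \lesssim (L/t)^d \log(1/t),
\]
which is the Kolmogorov--Tihomirov bound for Lipschitz classes on doubling metric spaces (the special case $d=2k$ was derived in the preceding display). This covering bound itself follows by taking a $(t/L)$-net $N_t$ in $\Z$ of size $(L/t)^d$ (guaranteed by the doubling property), restricting any $f \in \F_L$ to $N_t$, and quantizing its values in $[0,2]$ at scale $t$; the $L$-Lipschitz interpolation of the quantized values approximates the original $f$ to $O(t)$ in sup-norm.

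Next, I would apply Dudley's bound with truncation parameter $\alpha \in (0,2)$:
\[
\hat \cR_n(\F_L; \Z) \leq 4\alpha + \frac{12}{\sqrt{n}} \int_\alpha^{2} \sqrt{\log \cN(t, \F_L, \nrm{\cdot}_\infty)} \, dt,
\]
valid since $\F_L$ takes values in $[0,2]$. Substituting the covering estimate, the integrand is of order $L^{d/2} t^{-d/2} \sqrt{\log(1/t)}$; for $d \geq 2$ the integral is controlled by its lower endpoint and evaluates to $O\bigl(L^{d/2} \alpha^{1-d/2} \sqrt{\log(1/\alpha)}\bigr)$.

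Finally, I would optimize the truncation level $\alpha$ by balancing the two terms $\alpha$ and $L^{d/2}\alpha^{1-d/2}/\sqrt{n}$; choosing $\alpha \asymp L/n^{1/(d+1)}$ (up to polylogarithmic factors) yields the stated rate $O(L/n^{1/(d+1)})$. Throughout, the dependence on the ambient embedding is captured only through the doubling dimension $d$, so no explicit use of $k$ or $\ell$ is needed.

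The main obstacle is extracting the correct exponent $1/(d+1)$ (rather than $1/d$ from naive chaining or $1/(d+2)$ from one-shot covering plus Massart's lemma) and carefully managing the $\log(1/t)$ factor inside the square root across the chaining levels. In practice, since Theorem~4.3 of \citet{gottlieb13adaptive} states precisely this conclusion for an abstract $[0,2]$-valued $L$-Lipschitz class on a doubling metric space of diameter $1$, one may appeal to that theorem directly after checking that the loss class $\F_L$ satisfies all three hypotheses (Lipschitzness, boundedness, and doubling), which has already been verified above.
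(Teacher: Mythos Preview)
Your proposal is essentially correct and aligns with the paper's treatment: the paper does not prove this statement at all but simply quotes it verbatim as Theorem~4.3 of \citet{gottlieb13adaptive}, after verifying that the loss class $\F_L$ is $[0,2]$-valued and $(L\vee1)$-Lipschitz on a doubling space. Your final paragraph does exactly this, so at the level of the paper's own argument you are spot on.

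One small remark on the chaining sketch you added on top of the citation: your identification of ``the correct exponent $1/(d+1)$'' versus ``$1/d$ from naive chaining'' is backwards. Carrying out the Dudley balance you wrote down, $\alpha \asymp L^{d/2}\alpha^{1-d/2}/\sqrt{n}$, actually yields $\alpha \asymp L n^{-1/d}$, which is the \emph{stronger} rate; the $1/(d+1)$ exponent in the theorem is weaker and therefore automatically implied. Your explicit (suboptimal) choice $\alpha \asymp L n^{-1/(d+1)}$ still makes both terms $O(L n^{-1/(d+1)})$, so the argument goes through, but there is no obstacle to overcome here --- chaining gives more than is claimed, not less.
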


Putting $d = k+\ell$ yields our generalization bound:
\begin{equation}
  Q_n(k,\ell,L) = R(h(z)) - \hat R_n(h(z)) = O\Big(\frac{L}{n^{\frac{1}{d+1}}}
  \Big).
\end{equation}

\section{Additional experiments.}
For completeness we add here the comparison of the results from the experiment for $f(x) = \sin(x)$ for $ x\in [-2\pi,2\pi]$.

\begin{table}
%[h!]
  \begin{center}
    \caption{ERM of the smoothing process}
%    \label{tab:res}
    \begin{tabular}{l|c|c|c|r} %
     training points & 100 & 200 & 500 & 1000 \\ \hline
     \textbf{Algorithm} & \multicolumn{4}{c}{\textbf{Avg. loss}} \\
      \hline
      MWU & 5.5e-09 & 1.6639e-08 & 3.7683e-08 & 6.8339e-08\\
      IntPt & 5.3e-16 & 5913495.3623 & & \\
  
    \end{tabular}
  \end{center}
\end{table}

\begin{table}
%[h!]
  \begin{center}
    \caption{Cross validation running time over the smoothing process in seconds}
%    \label{tab:res}
    \begin{tabular}{l|c|c|c|r} %
     training points & 100 & 200 & 500 & 1000 \\ \hline
     \textbf{Algorithm} & \multicolumn{4}{c}{\textbf{Avg. loss}} \\
      \hline
      MWU & 5.1933 & 11.3827 & 66.5677 & 335.0659\\
      IntPt & 3508.4048 & 3936.6494 & & \\
  
    \end{tabular}
  \end{center}
\end{table}

\begin{table}
%[h!]
  \begin{center}
    \caption{Single$^*$ smoothing process running time in seconds}
%    \label{tab:res}
    \begin{tabular}{l|c|c|c|r} %
     training points & 100 & 200 & 500 & 1000 \\ \hline
     \textbf{Algorithm} & \multicolumn{4}{c}{\textbf{Avg. loss}} \\
      \hline
      MWU & 0.78109   & 1.8081 & 8.1855 & 48.5127 \\
      IntPt & 114.5577   & 778.1016  & & \\
  
    \end{tabular}
  \end{center}
\end{table}

\begin{table}
%[h!]
  \begin{center}
    \caption{Extension avg loss}
%    \label{tab:res}
    \begin{tabular}{l|c|c|c|r} %
     training points & 100 & 200 & 500 & 1000 \\ \hline
     \textbf{Algorithm} & \multicolumn{4}{c}{\textbf{Avg. loss}} \\
      \hline
      MWU & 5.5e-09 & 1.09e-08 & 3.3e-08 & 7.3e-08\\
      IntPt & 0.2877 & 0.83248 & & \\
  
    \end{tabular}
  \end{center}
\end{table}

\begin{table}
%[h!]
  \begin{center}
    \caption{Extension running time for a all test set in seconds}
    \label{tab:res}
    \begin{tabular}{l|c|c|c|r} %
     training points & 100 & 200 & 500 & 1000 \\ \hline
     \textbf{Algorithm} & \multicolumn{4}{c}{\textbf{Avg. loss}} \\
      \hline
      MWU & 0.0026   & 0.0038 & 0.0061  & 0.0095  \\
      IntPt & 4.3164  & 1.824  & & \\
  
    \end{tabular}
  \end{center}
\end{table}

\begin{figure}
%[h]
  \centering
  	\begin{subfigure}[b]{0.4\linewidth}
    	\includegraphics[width=\linewidth] {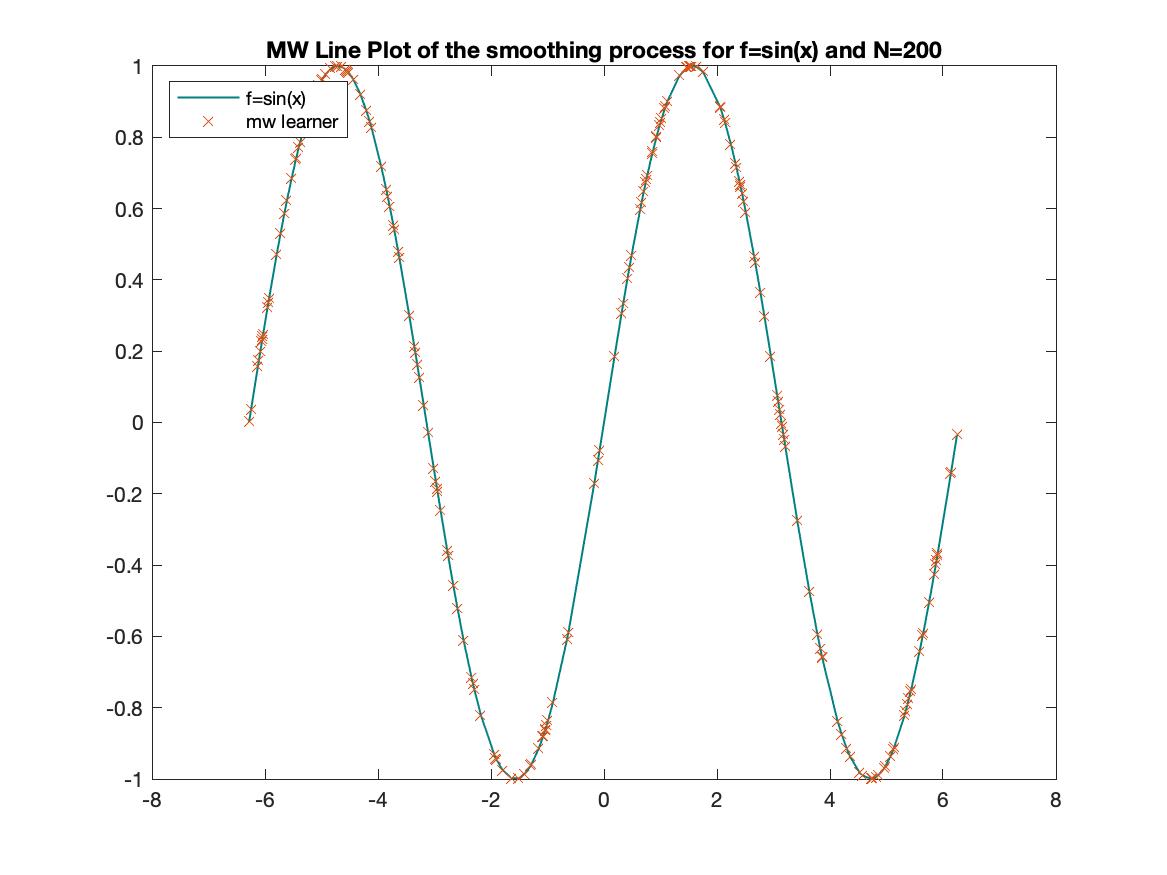}
    	\caption{}
  	\end{subfigure}
  	\begin{subfigure}[b]{0.4\linewidth}
    	\includegraphics[width=\linewidth] {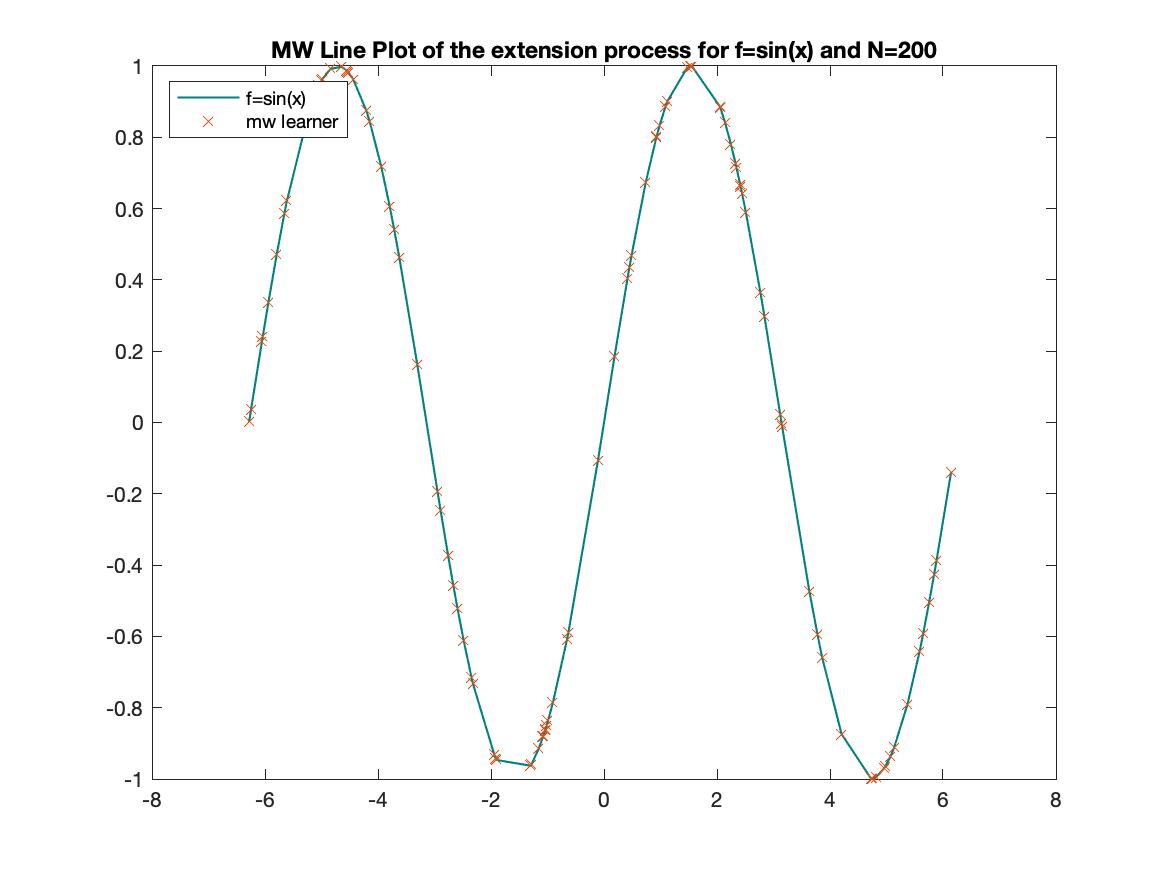}
		\caption{}
  	\end{subfigure}
  	\begin{subfigure}[b]{0.4\linewidth}
    	\includegraphics[width=\linewidth] {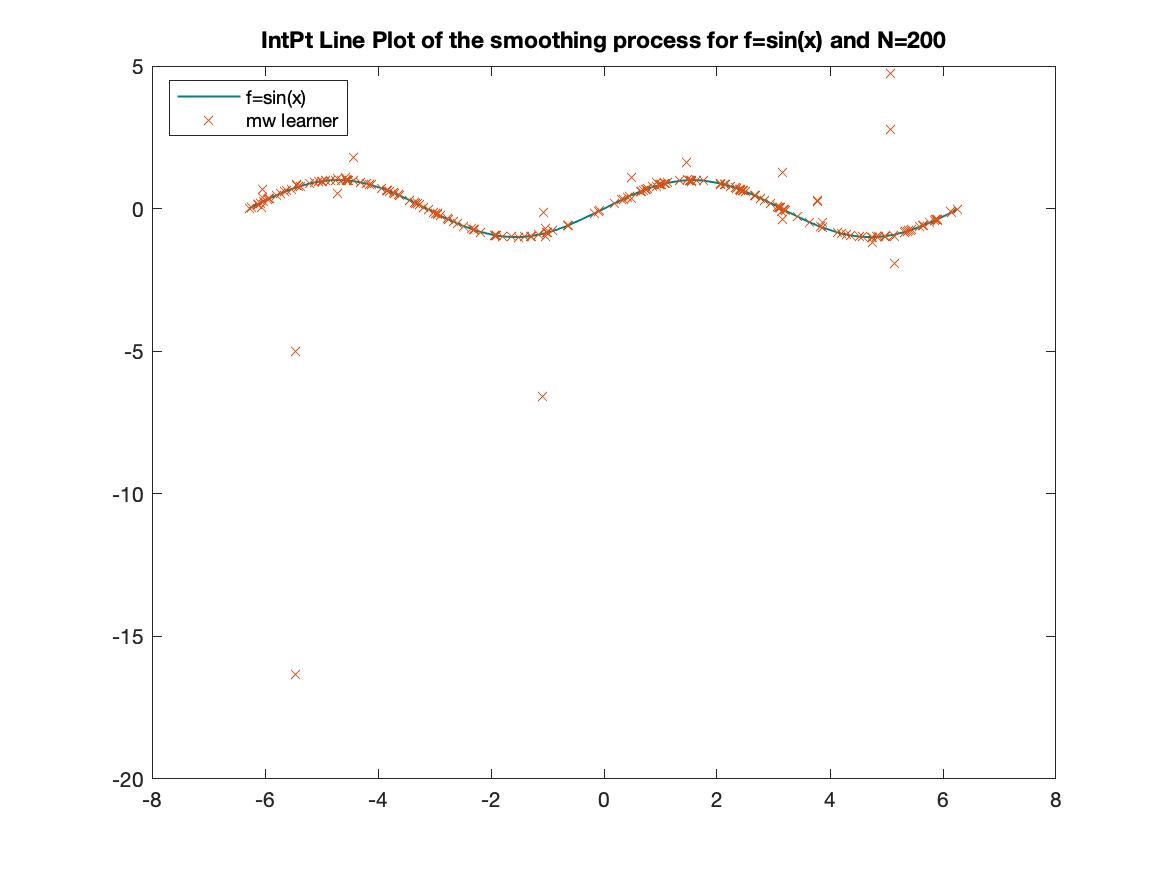}
		\caption{}
  	\end{subfigure}\begin{subfigure}[b]{0.4\linewidth}
    	\includegraphics[width=\linewidth] {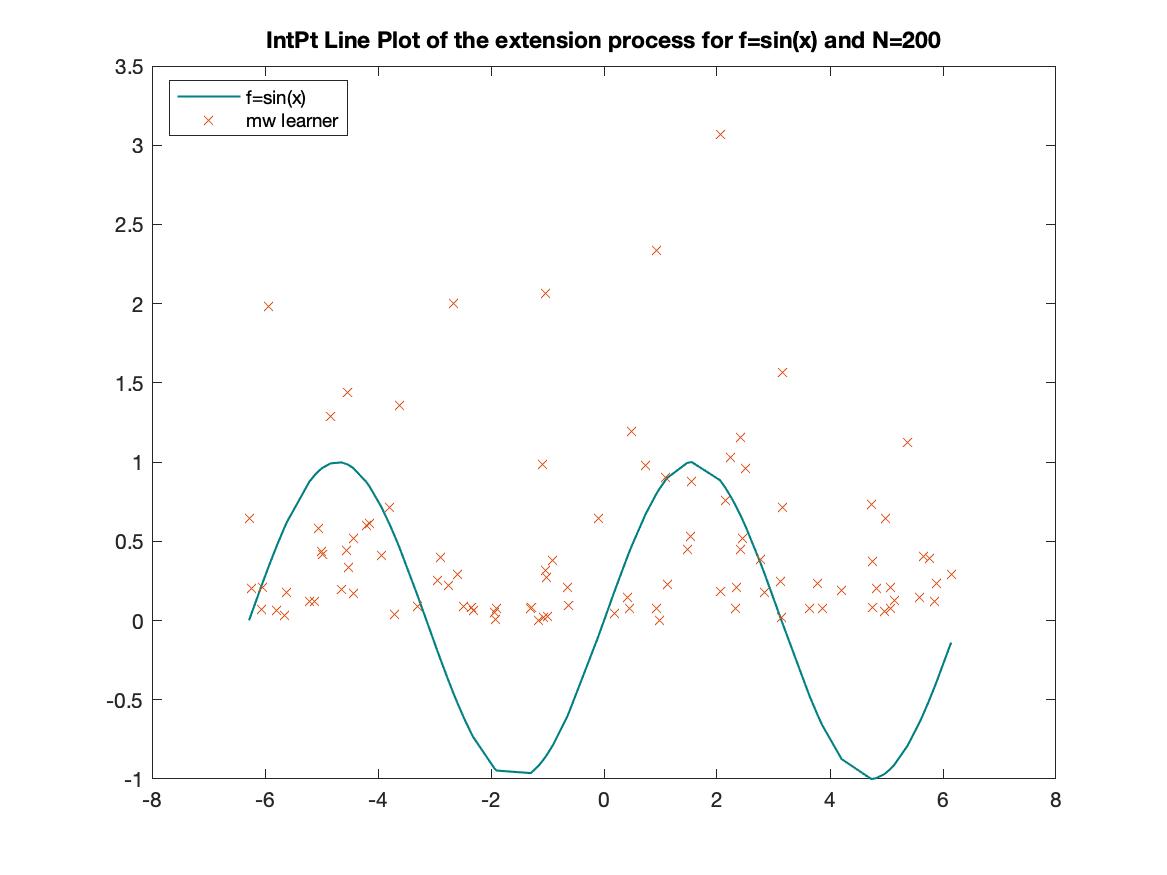}
		\caption{}
  	\end{subfigure}
  \caption{Figures (a)-(d) demonstrate the comparison, where training size is 200 random points, and $f=x^3$. Figures (a)-(b) are the smoothing and extension phases implemented with our MWU based algorithm while (c)-(d) are the results of Matlab's IntPt implementation. in All graphs the line represents the ground truth function while the X represent the estimation by the learned function}
  \label{fig:fig2}
\end{figure}

\end{document}